\documentclass[11pt]{article}
\usepackage[preprint]{acl}
\usepackage{times}
\usepackage{latexsym}
\usepackage[T1]{fontenc}
\usepackage[utf8]{inputenc}
\usepackage{microtype}
\usepackage{inconsolata}
\usepackage{graphicx}
\usepackage{amsmath,amssymb,amsthm}
\usepackage{booktabs}
\usepackage{enumitem}
\usepackage{makecell}
\usepackage{multirow}
\usepackage{array}
\usepackage{xcolor}
\usepackage{tikz}
\usetikzlibrary{arrows.meta,positioning,calc}
\usepackage{algorithm}
\usepackage[noend]{algpseudocode}
\usepackage{flafter}
\graphicspath{{figures/}{../figures/}{./}}
\makeatletter\g@addto@macro\UrlBreaks{\do\-}\makeatother

\newtheorem{theorem}{Theorem}[section]

\newcommand{\Auth}{\mathrm{Auth}}
\newcommand{\Cl}{\mathrm{Cl}}
\newcommand{\MaxAuth}{\mathrm{MaxAuth}}
\newcommand{\EMIT}{\textsc{emit}}
\newcommand{\stU}{\mathsf{U}}
\newcommand{\stT}{\mathsf{T}}
\newcommand{\stF}{\mathsf{F}}
\newcommand{\stC}{\mathsf{C}}
\newcommand{\cP}{\mathcal{P}}

\newcommand{\cA}{\mathcal{A}}
\newcommand{\cR}{\mathcal{R}}

\newcommand{\denote}{\mathrm{denote}}
\newcommand{\Render}{\mathrm{Render}}
\newcommand{\Ev}[1]{E_{\mathrm{#1}}}
\newcommand{\by}[1]{\ \text{\footnotesize[#1]}}

\title{PRIME: Towards Verifiable Evidence-Grounded Reasoning in LLM Agents}

\author{Hamed Khosravi \qquad Xiaoming Huo \\
  H. Milton Stewart School of Industrial and Systems Engineering \\
  Georgia Institute of Technology \\
  \texttt{hkhosravi7@gatech.edu} \qquad \texttt{huo@gatech.edu}}

\begin{document}
\maketitle

\begin{abstract}
LLM agents are used to check records against an organization's requirements, for example whether a vulnerability report is ready for investigation. Requirements often allow several ways to be met, such as a severity rating with a curator-reviewed advisory or a published advisory link. Agents fail by accepting a report after a required check has failed, or by settling for a weaker conclusion without trying another way that could still succeed. We introduce PRIME (Precommitted Rules for Inference from Minimal Evidence). Before checking, it fixes evidence routes specifying how each conclusion may be justified. A separate program records each check result and refuses any conclusion the results do not justify. It also points the agent to checks that could justify a stronger conclusion, or runs them itself. PRIME reports only conclusions authorized by its rules and ledger. With correct requirements, exact checks at known locations, and sufficient budget at stopping, completion reaches the strongest supported conclusion. On 480 public-record tasks, an agent shown the requirements reported unjustified conclusions on 26.7\% of tasks, while PRIME reported none. Its directions and added checks raised the average score for justified conclusions from 30.1, with refusal alone, to the maximum of 59.2 out of 100.
\end{abstract}

\section{Introduction}
\label{sec:intro}

Large language model (LLM) agents increasingly assist security triage \citep{saju2026soc,liu2026hesp}, and similar agents can check records against an organization's requirements, for example whether a software vulnerability report has enough information for analysts to investigate. The outcome is rarely a simple yes or no. A review can reach conclusions of increasing strength, from ``the report has a description'' to ``the report is ready for investigation'', and requirements often accept more than one combination of checks. Suppose a company accepts a report for investigation if it has a description and either (a) a severity rating together with an advisory reviewed by a curator, or (b) a link to a published security advisory. An agent finds that the curator review is missing. It can then fail in two ways. It may accept the report anyway, an \emph{unsupported conclusion} that its checks do not justify. Or it may settle for a weaker conclusion without checking the link, a \emph{missed conclusion}, although option (b) could still succeed. Showing an agent its requirements and previous check results does not enforce what it may report.

Existing methods address one failure or the other. Self-verification rechecks answers \citep{dhuliawala2023cove}, citation methods link claims to sources \citep{gao2023alce}, and conformal methods calibrate factuality risk \citep{mohri2024conformal}. None of them requires a reported conclusion to rest on checks that were actually run. Rule-enforcement approaches block agent actions that break fixed rules \citep{wang2026agentspec,doshi2026safe}, and evidence-gated systems refuse a claim until evidence for it has been recorded \citep{researchloop2026,prooforstop2026}. They do not jointly use fixed alternative routes to authorize graded conclusions and guide checking. Evidence-gathering methods select checks \citep{zhang2026glean,liu2026hesp}, but do not combine both functions through fixed routes.

We introduce PRIME (Precommitted Rules for Inference from Minimal Evidence), which handles both failures with one set of requirements. Before review, PRIME fixes the combinations of checks its program treats as sufficient for each conclusion. We call each such combination an \emph{evidence route}. A component separate from the agent, the \emph{runtime}, then records every check result and refuses to report any conclusion that lacks a fully checked route. It also tells the agent which unchecked items could still complete a route to the strongest conclusion within the remaining tool calls, and if the agent stops early, the runtime can run those checks itself. In the example, PRIME refuses to accept the report after the failed curator check and directs the agent to check the advisory link.

Our contributions follow the structure of the paper.
\begin{itemize}[leftmargin=*,itemsep=0pt,topsep=2pt,parsep=0pt]
\item \textbf{Formulation.} We state the review as choosing checks under a limit on tool calls, with the strict rule that every reported conclusion must be justified by checks actually run (Section~\ref{sec:problem}).
\item \textbf{Method.} PRIME uses the same evidence routes to decide what may be reported and which checks to run next (Section~\ref{sec:method}).
\item \textbf{Guarantees.} No agent behavior can make PRIME show the user an unjustified conclusion, and its remaining errors come only from wrong requirements or wrong check results. With correct rules, exact checks at known locations, and sufficient budget at stopping, completion reaches the strongest supported conclusion (Section~\ref{sec:guarantees}).
\item \textbf{Positioning.} We compare PRIME with verification, enforcement, and evidence-gathering systems and state its costs (Section~\ref{sec:positioning}).
\item \textbf{Benchmark and evaluation.} SUFFICE has 938 tasks built from public security, clinical-trial, drug-recall, and financial records, scored by code rather than by an LLM (Section~\ref{sec:bench}). On 480 held-out tasks and four models, PRIME reported no unsupported conclusions. For Qwen2.5-32B with refusal alone, the average score for supported conclusions was 30.1 out of 100. PRIME's directions raised it to 45.5, and running the remaining checks raised it to 59.2, the maximum possible on these tasks. Programs generated by an LLM from a clinical-trial regulation also ran unchanged on real registry records (Section~\ref{sec:experiments}).
\end{itemize}

\section{Problem Formulation}
\label{sec:problem}

\paragraph{Review task.} A review case $x=(S_x,V_x)$ has a \emph{specification} $S_x$ naming conclusions and available checks, and an initially unchecked \emph{evidence view} $V_x$ containing the report's values.
\begin{itemize}[leftmargin=*,itemsep=0pt,topsep=2pt,parsep=0pt]
\item \emph{Conclusions} $C=\{c_1\prec\dots\prec c_K\}$, ordered by strength, with utilities $0=U(\bot)<U(c_1)<\dots<U(c_K)$, where $\bot$ is abstention.
\item \emph{Evidence atoms} $E=\{e_1,\dots,e_m\}$, yes-or-no conditions such as ``a severity rating exists'', with hidden values $\omega_x\in\{0,1\}^m$.
\item \emph{Checks}: field and text reads and \emph{attestations}, in which a model or person records a judgment as evidence, settle one or more atoms, and each atom's \emph{resolver} is the registered procedure that checks it. Uncached checks and emissions cost one call from a budget $B$; cached repeats and $\textsc{stop}$ are free (Appendix~\ref{app:formal}).
\end{itemize}

\paragraph{Requirements.} A \emph{review program} $P$ gives each conclusion $c$ a set of evidence routes $P(c)$. A route $p$ is a conjunction of literals $e$ or $\neg e$, and completing any one route justifies $c$. In the example, acceptance has the routes $p_1=\{\text{description},\text{severity},\text{curator review}\}$ and $p_2=\{\text{description},\text{advisory reference},\text{URL}\}$. The \emph{intended standard} $f^\star_x(c,\omega)\in\{0,1\}$ says whether a complete assignment $\omega$ justifies $c$. The program is \emph{correct} when, for every $\omega$ and $c$, $f^\star_x(c,\omega)=1$ exactly when some route of $c$ or of a stronger conclusion holds under $\omega$.

\paragraph{Evidence and authorization.} The \emph{ledger} $z\in\{\stU,\stT,\stF\}^m$ records each atom as unknown, established true, or established false. A route $p$ holds at $z$, written $z\models p$, if every literal of $p$ is established. A conclusion is \emph{authorized} if a route to it or to a stronger conclusion holds:
\begin{equation}\label{eq:authorization}
\Auth_P(z)=\{c\mid\exists d\succeq c,\ p\in P(d):\ z\models p\}.
\end{equation}
Authorization compares recorded evidence with the program, not with the world: it does not test whether the program or the observations are correct.

\paragraph{Decision problem.} The agent issues one request per step: run a check (which updates the ledger), report a conclusion with $\EMIT(c)$, or $\textsc{stop}$. Let $P_0$ be the program fixed before the first check, $\pi$ the agent's policy, $\hat c$ the reported conclusion, and $z_\tau$ the final ledger. The goal is
\begin{equation}\label{eq:problem}
\begin{aligned}
\max_{\pi}\quad&\mathbb E_\pi\big[U(\hat c)\,f^\star_x(\hat c,\omega_x)\big]\\
\mathrm{s.t.}\quad&\hat c\in\Auth_{P_0}(z_\tau)\cup\{\bot\}\ \text{for every episode},\\
&\textstyle\text{total cost}\le B .
\end{aligned}
\end{equation}
The constraint makes a report \emph{verifiable}: anyone holding $P_0$ and the ledger can check it without trusting the agent. It must hold in every episode, not on average. The objective rewards the strongest conclusion that is also true. An unsupported conclusion violates the constraint, and a missed conclusion loses utility. Appendix~\ref{app:formal} adds logical constraints $\Phi_x$ between atoms, conflicting observations, and partially ordered conclusions, and Appendix~\ref{app:setup} gives the exact charging rule.

\section{PRIME}
\label{sec:method}

PRIME solves Problem~\eqref{eq:problem} in two parts: it enforces the constraint exactly and pursues the objective heuristically. Both use the committed routes (Figure~\ref{fig:overview}).

\begin{figure*}[t]
\centering
\tikzset{
  fdoc/.pic={
    \draw[fill=white, line width=0.6pt, line join=round] (-0.2,-0.27) -- (-0.2,0.27) -- (0.09,0.27) -- (0.2,0.16) -- (0.2,-0.27) -- cycle;
    \draw[line width=0.6pt, line join=round] (0.09,0.27) -- (0.09,0.16) -- (0.2,0.16);
    \draw[line width=0.5pt] (-0.12,0.08) -- (0.12,0.08) (-0.12,-0.02) -- (0.12,-0.02) (-0.12,-0.12) -- (0.07,-0.12);
  },
  fwarn/.pic={
    \fill[red!75!black, rounded corners=0.4pt] (0,0.15) -- (0.17,-0.13) -- (-0.17,-0.13) -- cycle;
    \node[text=white, font=\bfseries\fontsize{6}{6}\selectfont, inner sep=0] at (0,-0.035) {!};
  },
  frobot/.pic={
    \draw[line width=0.6pt] (0,0.2) -- (0,0.29);
    \fill[blue!60!black] (0,0.31) circle (0.04);
    \fill[blue!60!black] (-0.3,-0.06) rectangle (-0.26,0.08) (0.26,-0.06) rectangle (0.3,0.08);
    \draw[fill=white, line width=0.6pt, rounded corners=2.5pt] (-0.26,-0.18) rectangle (0.26,0.2);
    \fill[blue!25!black, rounded corners=1.5pt] (-0.19,-0.11) rectangle (0.19,0.13);
    \fill[cyan!60] (-0.08,0.01) circle (0.04) (0.08,0.01) circle (0.04);
    \draw[fill=white, line width=0.6pt, rounded corners=2pt] (-0.18,-0.38) rectangle (0.18,-0.21);
  },
  fshield/.pic={
    \fill[blue!65!black] (0,0.24) .. controls (0.1,0.19) and (0.17,0.18) .. (0.2,0.18) -- (0.19,-0.03)
      .. controls (0.16,-0.16) and (0.06,-0.23) .. (0,-0.27) .. controls (-0.06,-0.23) and (-0.16,-0.16) .. (-0.19,-0.03)
      -- (-0.2,0.18) .. controls (-0.17,0.18) and (-0.1,0.19) .. cycle;
    \fill[blue!35] (0,0.18) .. controls (0.07,0.15) and (0.12,0.14) .. (0.14,0.14) -- (0.13,-0.02)
      .. controls (0.11,-0.12) and (0.04,-0.17) .. (0,-0.2) -- cycle;
  },
  fsearch/.pic={
    \draw[line width=0.8pt] (-0.04,0.04) circle (0.11);
    \draw[line width=1.3pt, line cap=round] (0.04,-0.04) -- (0.14,-0.14);
  },
  fglobe/.pic={
    \draw[line width=0.6pt] (0,0) circle (0.14) (0,0) ellipse (0.06 and 0.14) (-0.14,0) -- (0.14,0);
    \draw[line width=0.45pt] (-0.12,0.07) -- (0.12,0.07) (-0.12,-0.07) -- (0.12,-0.07);
  },
  fok/.pic={
    \fill[green!50!black] (0,0) circle (0.12);
    \draw[white, line width=1pt, line cap=round, line join=round] (-0.06,0.0) -- (-0.015,-0.045) -- (0.065,0.05);
  },
  ffail/.pic={
    \fill[red!70!black] (0,0) circle (0.12);
    \draw[white, line width=1pt, line cap=round] (-0.045,-0.045) -- (0.045,0.045) (-0.045,0.045) -- (0.045,-0.045);
  },
  fopen/.pic={\draw[fill=white, line width=0.8pt] (0,0) circle (0.11);},
  fnogo/.pic={\draw[red!70!black, line width=1.5pt] (0,0) circle (0.16) (-0.11,0.11) -- (0.11,-0.11);},
}
\begin{tikzpicture}[
  x=1cm, y=1cm, font=\scriptsize,
  arr/.style={-{Stealth[length=4pt,width=3.5pt]}, line width=0.7pt, black!75},
  lab/.style={font=\scriptsize, align=center, inner sep=1pt},
  card/.style={draw=black!12, fill=white, rounded corners=2pt},
]
\draw[rounded corners=5pt, draw=blue!30, fill=blue!4] (0,0) rectangle (7.75,-7.5);
\node[anchor=west, font=\small\bfseries] at (0.15,-0.33) {(A) PRIME Overview};
\pic[scale=1.25] at (1.0,-1.1) {fdoc};
\pic at (1.28,-1.38) {fwarn};
\node[lab, anchor=north] at (1.0,-1.58) {{\footnotesize\bfseries Security}\\ {\footnotesize\bfseries Report}\\ (to be reviewed)};
\draw[arr] (1.55,-1.1) -- (2.4,-1.1);
\pic[scale=1.25] at (2.98,-1.1) {frobot};
\node[lab, anchor=north] at (2.98,-1.62) {{\footnotesize\bfseries LLM Agent}\\ proposes the next\\ check or conclusion};
\draw[rounded corners=4pt, dashed, draw=black!40, fill=white, line width=0.6pt] (4.3,-0.62) rectangle (7.6,-2.45);
\node[font=\footnotesize\bfseries] at (5.95,-0.9) {Tools for Checks};
\pic at (4.88,-1.36) {fsearch};
\pic[scale=0.8] at (5.95,-1.36) {fdoc};
\pic at (7.02,-1.36) {fglobe};
\node[lab, anchor=north] at (4.88,-1.66) {Read field};
\node[lab, anchor=north] at (5.95,-1.66) {Read text};
\node[lab, anchor=north] at (7.02,-1.66) {Look up\\ record};
\draw[rounded corners=4pt, draw=red!45, fill=red!3, line width=0.8pt] (3.0,-3.2) rectangle (7.6,-5.3);
\pic at (3.4,-3.53) {fshield};
\node[anchor=west, font=\footnotesize\bfseries] at (3.65,-3.53) {PRIME Runtime};
\node[anchor=north west, align=left, inner sep=0] at (3.18,-3.82) {%
  $\bullet$ Fixed review program (requirements)\\[1pt]
  $\bullet$ Evidence ledger (records check results)\\[1pt]
  $\bullet$ Guidance (useful remaining checks)\\[1pt]
  $\bullet$ Authorization (allowed conclusions)};
\draw[arr] (3.3,-2.66) -- (3.3,-3.2) node[midway, left, lab] {proposed\\ action};
\draw[arr] (4.0,-3.2) -- (4.0,-2.66) node[midway, right, lab] {feedback and\\ guidance};
\draw[arr] (6.2,-3.2) -- (6.2,-2.45) node[midway, left, lab] {runs\\ checks};
\draw[arr] (6.55,-2.45) -- (6.55,-3.2) node[midway, right, lab] {results};
\draw[arr] (5.3,-5.3) -- (5.3,-5.85);
\draw[rounded corners=4pt, draw=green!45!black, fill=green!8, line width=0.8pt] (3.0,-5.85) rectangle (7.6,-7.05);
\pic[scale=1.1] at (3.45,-6.45) {fdoc};
\pic[scale=0.95] at (3.66,-6.67) {fok};
\node[anchor=west, align=left] at (3.95,-6.45) {{\footnotesize\bfseries Authorized Answer}\\ supported by collected evidence};
\draw[rounded corners=3pt, dashed, draw=black!35, fill=white] (0.15,-3.35) rectangle (2.75,-7.38);
\node[anchor=west, font=\scriptsize\bfseries] at (0.25,-3.6) {Notation};
\node at (0.42,-3.98) {$c$};      \node[anchor=west, align=left] at (0.62,-3.98) {Candidate\\ conclusion};
\node at (0.42,-4.70) {$p$};      \node[anchor=west, align=left] at (0.62,-4.70) {Evidence route\\ (required check\\ outcomes)};
\pic at (0.42,-5.40) {fok};       \node[anchor=west] at (0.62,-5.40) {Check passed};
\pic at (0.42,-5.70) {ffail};     \node[anchor=west] at (0.62,-5.70) {Check failed};
\pic at (0.42,-6.00) {fopen};     \node[anchor=west] at (0.62,-6.00) {Not yet checked};
\fill[red!18, rounded corners=1pt] (0.27,-6.21) rectangle (0.57,-6.39); \node[anchor=west] at (0.62,-6.30) {Route blocked};
\fill[green!20, rounded corners=1pt] (0.27,-6.51) rectangle (0.57,-6.69); \node[anchor=west] at (0.62,-6.60) {Route still possible};
\draw[blue!60!black, dashed, -{Stealth[length=3pt]}, line width=0.6pt] (0.25,-6.98) -- (0.59,-6.98);
\node[anchor=west, align=left] at (0.62,-6.98) {Next check\\ recommended};
\draw[rounded corners=5pt, draw=orange!45, fill=orange!6] (8.0,0) rectangle (16.0,-7.5);
\node[anchor=west, font=\small\bfseries] at (8.15,-0.33) {(B) Worked Example: \textit{Reviewing a Security Report}};
\draw[rounded corners=4pt, draw=blue!30, fill=blue!6] (8.2,-0.62) rectangle (15.8,-1.42);
\pic at (8.52,-1.02) {fdoc};
\node[anchor=west, font=\footnotesize\bfseries] at (8.78,-1.02) {Target conclusion $c$:};
\draw[rounded corners=3pt, draw=orange!35, fill=orange!14] (11.75,-0.7) rectangle (15.68,-1.34);
\node[font=\scriptsize\bfseries, align=center] at (13.715,-1.02) {The report meets the\\ requirements for further review.};
\draw[rounded corners=4pt, draw=red!35, fill=red!6] (8.2,-1.57) rectangle (11.92,-4.97);
\node[anchor=west, font=\footnotesize\bfseries] at (8.3,-1.83) {Route $p_1$ (blocked)};
\draw[black!30, line width=0.6pt] (8.62,-2.37) -- (8.62,-3.57);
\draw[card] (8.85,-2.07) rectangle (11.8,-2.67); \pic at (8.62,-2.37) {fok};
\node[anchor=west, align=left] at (8.9,-2.37) {Vulnerability description\\ exists in the report};
\draw[card] (8.85,-2.77) rectangle (11.8,-3.17); \pic at (8.62,-2.97) {fok};
\node[anchor=west] at (8.9,-2.97) {Severity rating exists};
\draw[rounded corners=2pt, draw=red!25, fill=red!14] (8.85,-3.27) rectangle (11.8,-3.87); \pic at (8.62,-3.57) {ffail};
\node[anchor=west, align=left] at (8.9,-3.57) {Advisory reviewed\\ by a curator};
\draw[arr] (10.06,-3.89) -- (10.06,-4.15);
\draw[rounded corners=3pt, fill=red!18, draw=none] (8.35,-4.17) rectangle (11.77,-4.85);
\node[align=center] at (10.06,-4.51) {{\bfseries Route $p_1$ blocked}\\ (required check failed)};
\draw[rounded corners=4pt, draw=green!45!black!60, fill=green!6] (12.08,-1.57) rectangle (15.8,-4.97);
\node[anchor=west, font=\footnotesize\bfseries] at (12.18,-1.83) {Route $p_2$ (still possible)};
\draw[black!30, line width=0.6pt] (12.5,-2.37) -- (12.5,-3.67);
\draw[card] (12.73,-2.07) rectangle (15.68,-2.67); \pic at (12.5,-2.37) {fok};
\node[anchor=west, align=left] at (12.78,-2.37) {Vulnerability description\\ exists in the report};
\draw[card] (12.73,-2.77) rectangle (15.68,-3.37); \pic at (12.5,-3.07) {fopen};
\node[anchor=west, align=left] at (12.78,-3.07) {Reference is a published\\ advisory};
\draw[card] (12.73,-3.47) rectangle (15.68,-3.87); \pic at (12.5,-3.67) {fopen};
\node[anchor=west] at (12.78,-3.67) {Reference has a URL};
\draw[arr] (13.94,-3.89) -- (13.94,-4.15);
\draw[rounded corners=3pt, fill=green!20, draw=none] (12.23,-4.17) rectangle (15.65,-4.85);
\node[align=center] at (13.94,-4.51) {{\bfseries Route $p_2$ can still be used}\\ (not yet checked)};
\draw[rounded corners=4pt, dashed, draw=black!35, fill=white] (8.2,-5.12) rectangle (15.8,-7.38);
\draw[black!20] (10.7,-5.2) -- (10.7,-7.3) (13.05,-5.2) -- (13.05,-7.3);
\node[anchor=west, font=\scriptsize\bfseries] at (8.3,-5.37) {Current state};
\node[anchor=north west, align=left, inner sep=0] at (8.32,-5.6) {%
  $\bullet$ $p_1$ blocked (curator-\\ \phantom{$\bullet$} review check\\ \phantom{$\bullet$} failed)\\
  $\bullet$ $p_2$ possible but\\ \phantom{$\bullet$} incomplete};
\node[anchor=west, font=\scriptsize\bfseries] at (10.8,-5.37) {PRIME guidance};
\draw[blue!60!black, dashed, -{Stealth[length=3pt]}, line width=0.6pt] (10.8,-6.4) -- (11.1,-6.4);
\draw[rounded corners=3pt, draw=blue!35, fill=blue!10] (11.1,-5.62) rectangle (12.97,-7.2);
\node[align=center] at (12.035,-6.41) {\textbf{Check the}\\ \textbf{supporting}\\ \textbf{reference next}\\ (from route $p_2$)};
\node[anchor=west, font=\scriptsize\bfseries] at (13.15,-5.37) {Authorized conclusion};
\pic at (13.38,-6.41) {fnogo};
\draw[rounded corners=3pt, draw=red!30, fill=red!12] (13.62,-5.62) rectangle (15.72,-7.2);
\node[align=center] at (14.67,-6.41) {\textbf{Not yet allowed}\\ \textbf{to conclude} $c$\\ (not yet checked)};
\end{tikzpicture}
\caption{PRIME in a hypothetical security-report review. (A) The independent runtime records check outcomes, uses precommitted evidence routes to guide the agent, and authorizes conclusions. (B) Acceptance requires a description plus either a severity rating and curator-reviewed advisory ($p_1$) or a published advisory reference with a URL ($p_2$). A failed curator-review check blocks $p_1$, while the reference checks of $p_2$ remain open. PRIME recommends checking the reference but cannot yet authorize acceptance.}
\label{fig:overview}
\end{figure*}

\paragraph{1. Commit.} A commitment procedure $G$ produces $P_0=G(S_x,r)$ from the specification and randomness $r$ alone, never from field values. The program may be supplied by a person or generated by a model. The runtime validates its structure, records its hash, and only then enables checks, so a failed check cannot lead to rewriting the requirements (Appendix~\ref{app:formal}).

\paragraph{2. Gate.} The runtime refuses $\EMIT(c)$ unless $c\in\Auth_{P_0}(z)$ at the current ledger. If checking ends without an accepted request, it reports the strongest authorized conclusion, or abstains if none is authorized. A renderer with fixed templates produces the answer, listing the conclusion with its supporting checks rather than free model prose.

\paragraph{3. Guide.} Blocking unsupported reports can make the agent stop too early, so PRIME also uses the routes to direct further checking. The \emph{residual} of route $p$ at $z$ is the set of its literals not yet established,
\begin{equation}
\label{eq:residual}
R(p,z)=\{\ell\in p\mid z\not\models\ell\}.
\end{equation}
A route is \emph{live} if none of its literals is established false or certified unavailable. Let $\hat\kappa(p,z)$ be the number of distinct checks needed to settle $R(p,z)$. With remaining budget $b$, PRIME targets the strongest conclusion that has a live route within budget, and that conclusion's cheapest live route:
\begin{equation}
\label{eq:target}
\begin{aligned}
\cR(z,b)&=\{c\mid\exists p\in P_0(c)\text{ live},\ \hat\kappa(p,z)\le b\},\\
c^\star&\in\arg\max_{c\in\cR(z,b)}U(c),\\
p^\star&\in\arg\min_{p\in P_0(c^\star)\text{ live}}\hat\kappa(p,z).
\end{aligned}
\end{equation}
If $\cR(z,b)$ is empty, $c^\star=\bot$. The agent sees $c^\star$, $R(p^\star,z)$, and $\Auth_{P_0}(z)$, and still chooses its own next check. While $c^\star$ is stronger than every authorized conclusion, the runtime holds requests to report a weaker one and answers up to three $\textsc{stop}$ requests with reminders.

\paragraph{Completion.} With completion enabled, when the agent finally stops, the runtime runs the open checks of $p^\star$ itself and recomputes the target after each result, until the target is authorized, no stronger conclusion is reachable, or the budget is spent. It uses only registered resolvers. It cannot invent evidence or make judgments that require a model or a person. Algorithm~\ref{alg:prime} in Appendix~\ref{app:proofs} gives the full procedure.

\section{Guarantees and Why They Matter}
\label{sec:guarantees}

We assume a trusted runtime that faithfully records resolver outputs and identified attestations, and a renderer that is \emph{sound}: its text expresses only conclusions marked as authorized. Appendix~\ref{app:formal} states these premises (T1--T4), and Appendix~\ref{app:proofs} gives all proofs.

\begin{theorem}[Authorization holds for every agent]
\label{thm:safety}
(i) For fixed randomness, two cases with the same specification receive the same committed program, whatever their evidence. (ii) For every policy, sequence of actions and resolver results, budget, and stopping point, every conclusion PRIME outputs at final ledger $z$ lies in $\Auth_{P_0}(z)$. (iii) If the renderer is sound, every conclusion the user sees also lies in $\Auth_{P_0}(z)$.
\end{theorem}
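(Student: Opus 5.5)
The proof follows the three parts of Theorem~\ref{thm:safety}, and each is a structural invariant of the runtime rather than a probabilistic claim. For (i), I will use the commitment step directly: $P_0=G(S_x,r)$ is a deterministic function of the specification and the randomness, and $G$ takes no argument depending on $V_x$ or $\omega_x$. Hence two cases $x,x'$ with $S_x=S_{x'}$ and the same $r$ get $G(S_x,r)=G(S_{x'},r)$. I also need that nothing later can replace $P_0$. The runtime records the hash of the validated program before any check is enabled, and by premise T1 (trusted runtime) it consults only the program matching that hash. So the program used for authorization in every later step is this same $P_0$.

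For (ii), I will enumerate the ways the runtime can output a conclusion and show each passes the gate at the final ledger. There are three exits: an accepted $\EMIT(c)$, the default report when checking ends without an accepted emit, and termination after completion.

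For an accepted emit, the gate admits $c$ only if $c\in\Auth_{P_0}(z_t)$ at the ledger $z_t$ of that step. I then need $c\in\Auth_{P_0}(z)$ at the final ledger $z$. Here there are two options. The first is to take the final ledger to be $z_t$, because emission terminates the episode. The second is to prove monotonicity: entries only move from $\stU$ to $\stT$ or $\stF$ and are never overwritten. Then $z_t\models p$ implies $z\models p$, since each established literal stays established. Monotonicity requires that conflicting observations never flip an established entry, and I will check this against the conflict rule in Appendix~\ref{app:formal}.

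For the default and completion exits, the runtime reports $\max\Auth_{P_0}(z)$ or $\bot$ by definition. Completion only adds ledger entries through registered resolvers and then applies the same selection rule. So the output lies in $\Auth_{P_0}(z)\cup\{\bot\}$ regardless of the policy, the resolver outputs, the budget, or the stopping time. The guidance machinery ($c^\star$, held reports, reminders) can only block or delay actions and never produces an output. I will argue this by induction on the steps of Algorithm~\ref{alg:prime}, with the invariant ``every output so far lies in $\Auth_{P_0}$ of the current ledger.''

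Part (iii) is immediate from soundness. The renderer's templates express only conclusions marked as authorized, and by (ii) the marked set is contained in $\Auth_{P_0}(z)$.

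The main obstacle is the emit case of (ii), specifically the link between the ledger at gate time and the final ledger. This needs a precise formal model: whether emission ends the episode, how conflicts are recorded, and how partially ordered conclusions enter the upward closure in \eqref{eq:authorization}. I will first try to show that emission is terminal in the algorithm, so no monotonicity is needed. If it is not terminal, I will prove ledger monotonicity from premises T1--T4.
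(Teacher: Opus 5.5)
Your parts (ii) and (iii) follow the paper's proof. Acceptance of $\EMIT(c)$ is a \textbf{return} in line~\ref{alg:accept} of Algorithm~\ref{alg:prime}, so the final ledger is the gate-time ledger $z_t$ and no monotonicity is needed. Otherwise line~\ref{alg:final} states $\MaxAuth_{P_0}(z_\tau)\subseteq\Auth_{P_0}(z_\tau)$, with $z_\tau$ taken after completion. Held emissions, reminders and refusals output nothing, and (iii) is soundness applied to (ii).

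Three small corrections there:
\begin{itemize}
\item Drop the monotonicity fallback rather than try to prove it, because it is false in this model. Under (D1), disagreeing admitted values turn a $\stT$ or $\stF$ entry into $\stC$, and T4 allows retractions. With nonempty $\Phi_x$, adding a literal can also make $\Cl_{\Phi_x}(z)=\bot$ and authorize nothing. So $\Auth_{P_0}$ can shrink as the ledger grows, and only terminality of acceptance saves the emit case.
\item The output marks more than the stated conclusion. $M(y)$ also contains the authorized conclusions that $y$ lists, and you should note that these are read off $\Auth_{P_0}(z)$ at the same final ledger.
\item T1 is the premise about a finite, mechanically ordered list of conclusions, not ``trusted runtime''. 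The faithful-ledger premise is T4.
\end{itemize}

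Part (i) is where you genuinely differ from the paper.

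You posit that $G$ takes no argument depending on $V_x$, which makes (i) true by definition. You then use the hash only to argue that $P_0$ cannot be replaced later.

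The paper uses the hash ordering for something else. It models $G$ as an interactive procedure, such as a model with tool access, that in each round either halts with a program or issues a resolver request. The hash is computed from the program $G$ returns, and resolvers are enabled only after the hash is recorded. So every request $G$ makes is answered $\bot$, whatever the evidence view. Induction on rounds then gives $P_0=g_J(S,r,\bot^{J-1})=:\hat G(S,r)$.

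That derivation is what shows the commit-then-enable ordering \emph{enforces} evidence-independence rather than assuming it. Your version covers only a $G$ that is evidence-blind by construction. To give your (i) the same content, replace the assumption with this round-by-round argument. Optionally, add the paper's extension to seeds drawn from a distribution that does not depend on the evidence.
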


\emph{Why it matters.} The constraint of~\eqref{eq:problem} holds whatever the model, prompt, or agent behavior, so it does not depend on the agent following instructions. An auditor can check any answer against the committed program and the ledger without rerunning the agent. Part (i) rules out tailoring the requirements to the evidence.

\medskip
Authorization is not truth. Call a ledger \emph{truthful} when every atom it settles agrees with $\omega_x$. Let $\Ev{prog}$ be the event that the program authorizes, at some truthful ledger, a conclusion with $f^\star_x=0$, and $\Ev{res}$ the event that the runtime admits an incorrect atom value, from a resolver or an attester. Let $\Ev{mask}$ and $\Ev{render}$ be the failures of parts (ii) and (iii), and $\Ev{user}$ the event that the user sees a conclusion with $f^\star_x=0$.

\begin{theorem}[Where end-to-end error comes from]
\label{thm:decomposition}
Under T3 and T4, for any distribution of episodes,
\begin{equation}
\begin{aligned}
\Pr(\Ev{user})\le\ &\Pr(\Ev{prog})+\Pr(\Ev{res})\\
&+\Pr(\Ev{mask})+\Pr(\Ev{render}),
\end{aligned}
\label{eq:decomp}
\end{equation}
and by Theorem~\ref{thm:safety} the last two terms are zero.
\end{theorem}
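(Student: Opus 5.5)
The plan is to prove the bound by event inclusion followed by a union bound. I will show that, outside $\Ev{prog}\cup\Ev{res}\cup\Ev{mask}\cup\Ev{render}$, the event $\Ev{user}$ cannot occur. That is, $\Ev{user}\subseteq\Ev{prog}\cup\Ev{res}\cup\Ev{mask}\cup\Ev{render}$ pointwise on episodes. Subadditivity of probability then gives \eqref{eq:decomp} for any distribution, since no independence or structure of the episode law is used.

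\textbf{Step 1: chase the chain from the user's view back to the world.} Fix an episode in which none of the four events occurs, and suppose the user sees a conclusion $c$.
\begin{itemize}[leftmargin=*,itemsep=0pt,topsep=2pt,parsep=0pt]
\item Since $\Ev{render}$ fails, the renderer was sound in this episode, so $c$ was marked by the runtime as output. By T4, the renderer expresses only conclusions marked authorized.
\item Since $\Ev{mask}$ fails, every conclusion the runtime outputs lies in $\Auth_{P_0}(z)$ at the final ledger $z$, so $c\in\Auth_{P_0}(z)$.
\item Since $\Ev{res}$ fails, every atom value admitted into $z$ agrees with $\omega_x$. Under T3 (the runtime faithfully records resolver outputs and identified attestations, with no other writes), every settled entry of $z$ came from such an admission. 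Hence $z$ is truthful.
\item Since $\Ev{prog}$ fails, the program authorizes at truthful ledgers no conclusion with $f^\star_x=0$. Therefore $f^\star_x(c,\omega_x)=1$, and $\Ev{user}$ does not occur.
\end{itemize}
The abstention case $\bot$ is trivially excluded, because $\Ev{user}$ concerns a seen conclusion.

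\textbf{Step 2: the last two terms vanish.} Theorem~\ref{thm:safety}(ii) holds for every policy, sequence of actions and resolver results, budget, and stopping point. So $\Ev{mask}$ is the empty event and $\Pr(\Ev{mask})=0$. Likewise, part (iii) together with the soundness premise in T4 makes $\Ev{render}$ empty, so $\Pr(\Ev{render})=0$.

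\textbf{Main obstacle: establishing that $z$ is truthful.} This requires that $z$ contains nothing except admitted resolver or attestation values. Conflicting observations and the constraints $\Phi_x$ of Appendix~\ref{app:formal} could in principle let the runtime derive or overwrite entries. I would handle this in two ways.
\begin{itemize}[leftmargin=*,itemsep=0pt,topsep=2pt,parsep=0pt]
\item Read T3 as covering every write path. Derived entries follow by valid inference from $\Phi_x$, which $\omega_x$ satisfies, so they remain truthful.
\item If a conflict marks an atom unknown or unavailable rather than true or false, this only removes settled entries and cannot break truthfulness.
\end{itemize}
A second subtlety is that $\Ev{prog}$ quantifies over \emph{some} truthful ledger. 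The realized $z$ is one such ledger, so the quantifier covers it with no additional argument needed.
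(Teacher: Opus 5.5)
Your proof is correct and follows the paper's argument essentially step for step. You establish $\Ev{user}\subseteq\Ev{prog}\cup\Ev{res}\cup\Ev{mask}\cup\Ev{render}$ by pushing a seen conclusion with $f^\star_x=0$ through the complements of $\Ev{render}$, $\Ev{mask}$, and $\Ev{res}$ (with faithful recording) into $\Ev{prog}$ at the realized truthful ledger, then apply the union bound and use Theorem~\ref{thm:safety}(ii)--(iii) to make the last two terms zero. One small correction: you have swapped the premise labels (in the paper T3 is the sound-renderer premise and T4 the faithful-ledger premise), and the step $c\in M(y)$ needs no premise at all, since it is exactly the complement of $\Ev{render}$.
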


\emph{Why it matters.} Any remaining error comes from two components that can be tested separately: the rules, which can be checked against the intended standard, and the observations, which can be verified. Effort can therefore go where error actually arises. When every atom is settled by an exact deterministic resolver, $\Pr(\Ev{res})=0$. Wrong admitted judgments still enter $\Ev{res}$.

\medskip
The first two results concern safety. The third concerns utility: it says when the gate costs nothing. Let $c^\dagger$ be the strongest conclusion with $f^\star_x(c^\dagger,\omega_x)=1$, or $\bot$ if there is none. Let $E^\dagger$ be the atoms on the routes of $c^\dagger$ and of every stronger conclusion, or of all conclusions if $c^\dagger=\bot$. Let $N(z)$ be the number of distinct checks needed to settle the atoms of $E^\dagger$ that are unknown at $z$, and $z_0$ the initial ledger.

\begin{theorem}[Completion reaches the strongest supported conclusion]
\label{thm:ceiling}
Suppose the conclusions form a chain, $P_0$ is correct, $\Phi_x$ is empty, no atom of $E^\dagger$ has withheld evidence, every check returns the true values of the atoms it settles and reads a known field, and completion is enabled. If the agent's checking ends without an accepted report at a ledger $z$ with remaining budget $b\ge N(z)$, PRIME reports $c^\dagger$. In particular, if $B\ge N(z_0)$, PRIME reports $c^\dagger$ for an agent that stops at once, and for any agent whose charged requests all check unknown atoms of $E^\dagger$.
\end{theorem}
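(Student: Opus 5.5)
The plan is to track completion as a loop with an invariant and show it terminates exactly when $c^\dagger$ is authorized. First, relate truth to routes. Because $P_0$ is correct and $\Phi_x$ is empty, $f^\star_x(c^\dagger,\omega_x)=1$ gives some route $p^\dagger\in P_0(d)$ with $d\succeq c^\dagger$ that holds under $\omega_x$. Since $c^\dagger$ is the strongest true conclusion and truth of $c$ is witnessed by routes of $c$ or stronger conclusions, we may take $d=c^\dagger$. Likewise, for every $c\succ c^\dagger$, no route of $c$ holds under $\omega_x$: each such route contains a literal that is false under $\omega_x$.

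Second, use the hypothesis that every check returns true values, reads a known field, and no atom of $E^\dagger$ is withheld. Then every ledger $z$ arising in the episode is truthful. This gives three consequences.
\begin{enumerate}[label=(\alph*),itemsep=0pt]
\item No route of a conclusion stronger than $c^\dagger$ is ever established, so $\Auth_{P_0}(z)\subseteq\{c\preceq c^\dagger\}$. Hence PRIME never reports anything stronger than $c^\dagger$.
\item The route $p^\dagger$ stays live, because none of its literals can be established false or certified unavailable.
\item Any route of a stronger conclusion becomes dead once its false literal is checked. That literal is an atom of $E^\dagger$.
\end{enumerate}

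Third, and this is the main step, show that each completion check strictly decreases $N(z)$ while keeping $b\ge N(z)$. The target $p^\star$ at each iteration belongs to $c^\star\succeq c^\dagger$. Indeed, $\hat\kappa(p^\dagger,z)\le N(z)\le b$, because the residual of $p^\dagger$ lies inside the unknown atoms of $E^\dagger$. So $c^\dagger\in\cR(z,b)$, and the chain order puts $c^\star$ at or above it. The open checks of $p^\star$ therefore settle unknown atoms of $E^\dagger$. Each such check costs one call and reduces $N$ by at least one, so the invariant $b\ge N(z)$ is preserved.

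The obstacle is ruling out premature termination and checks that fall outside $E^\dagger$. Since routes of $c^\star$ are routes in $E^\dagger$, no check leaves $E^\dagger$. Since $N$ strictly decreases, the loop cannot run out of budget before settling all of $E^\dagger$. The loop also cannot stop on ``no stronger conclusion reachable'' while $c^\dagger$ is unauthorized, because $p^\dagger$ remains live and within budget. Once $N(z)=0$, every route of a stronger conclusion is dead by (c), and $p^\dagger$ is fully established. At that point the target equals $c^\dagger$ and it is authorized. The gate then reports the strongest authorized conclusion, which is $c^\dagger$ by (a). One caveat must be checked against the stop rule: completion may halt earlier, as soon as its current target is authorized. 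If that target is $c^\star\succ c^\dagger$, then (a) says it can never be authorized, so the halt only happens when the target is $c^\dagger$.

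For the ``in particular'' clause, an agent that stops at once leaves $z=z_0$ and $b=B\ge N(z_0)$. An agent whose every charged request checks an unknown atom of $E^\dagger$ reduces $N$ by at least one per call, so $b\ge N(z)$ holds when it stops. The agent might also emit: any accepted report is authorized, and by (a) it is at most $c^\dagger$. The hypothesis of the theorem, however, assumes no accepted report, so this case need not be handled.
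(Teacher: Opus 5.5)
Your proof is correct and follows essentially the same argument as the paper's. Nothing above $c^\dagger$ is ever authorized on a truthful ledger, and a true route $p^\dagger$ of $c^\dagger$ stays live. The invariant $b\ge N(z)$ keeps the target at or above $c^\dagger$ and survives each completion check, which lowers $b$ and $N$ by one. Completion can therefore stop only with $c^\dagger$ authorized.

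The one case you leave implicit is $c^\dagger=\bot$, where your first step does not apply. Your consequence (a) immediately gives $\Auth_{P_0}(z)=\varnothing$ throughout, so PRIME abstains as claimed.
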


\emph{Proof idea.} Because checks are exact, the ledger never contradicts $\omega_x$. So nothing above $c^\dagger$ is ever authorized, and some route $p^\dagger$ of $c^\dagger$, true under $\omega_x$, is never blocked. While $b\ge N(z)$, $p^\dagger$ fits the budget, so the target $c^\star$ is $c^\dagger$ or stronger. Its route then lies in $E^\dagger$, and each completion check lowers $b$ and $N(z)$ by one, which preserves $b\ge N(z)$. Completion therefore ends either with $c^\dagger$ authorized or with every atom of $E^\dagger$ settled, which also authorizes $c^\dagger$ (Appendix~\ref{app:proofs}).

\emph{The bound is tight, and $b_{\min}$ is not enough.} Let $b_{\min}$ be the cost of the cheapest route to the strongest conclusion. Take $c_1\prec c_2\prec c_3$ with routes $\{e_1\}$ or $\{e_3\}$ for $c_1$, $\{e_1,e_2\}$ or $\{e_3,e_4\}$ for $c_2$, and $\{e_1,e_5,e_2\}$ for $c_3$, checked in that order, where only $e_2$ is false. The target rule first pursues $c_3$, spends three checks learning that $e_2$ fails, and only then turns to $\{e_3,e_4\}$. With 3 calls, which equals $b_{\min}$ and exceeds the 2-call cost of $c_2$'s true route, PRIME ends at $c_1$. With 4 calls, one fewer than $N(z_0)$, it still ends at $c_1$, and with $N(z_0)=5$ calls it reaches $c_2$.

\emph{Why it matters.} Under these premises, whenever the agent's checking ends without an accepted report and with $b\ge N(z)$, PRIME reaches $c^\dagger$ and never reports above it (Theorem~\ref{thm:decomposition} with $\Pr(\Ev{prog})=\Pr(\Ev{res})=0$), so it earns the largest utility that Problem~\eqref{eq:problem} allows. The bound gives a budget rule that can be computed before checking, since $N(z_0)$ is at most the number of distinct checks on all routes. The guarantee requires sufficient remaining budget and known evidence locations; it does not cover early stopping without completion or checks requiring field discovery.

\section{Positioning, Advantages, and Costs}
\label{sec:positioning}

\begin{table}[t]
\centering
\fontsize{8}{9.5}\selectfont
\setlength{\tabcolsep}{2pt}
\begin{tabular}{@{}>{\raggedright\arraybackslash}p{0.33\columnwidth}>{\raggedright\arraybackslash}p{0.3\columnwidth}>{\raggedright\arraybackslash}p{0.33\columnwidth}@{}}
\toprule
Approach & Controls & Next check \\
\midrule
Answer verification, citation \citep{dhuliawala2023cove,gao2023alce} & nothing; claims rechecked afterward & not chosen \\
Conformal factuality \citep{mohri2024conformal} & claims, at calibrated risk & not chosen \\
Action rules \citep{wang2026agentspec,doshi2026safe} & agent actions & not chosen \\
Evidence-gated control \citep{researchloop2026,prooforstop2026,egvar2026} & claims or transitions & repair or stop \\
Evidence acquisition \citep{zhang2026glean,liu2026hesp} & stopping rule or verifier acceptance & uncertainty or information gain \\
PRIME & graded conclusions, every policy & open route of the strongest reachable conclusion \\
\bottomrule
\end{tabular}
\caption{What each family enforces and how it chooses further evidence. Among these families, only PRIME uses alternative routes both to authorize a conclusion and to choose the next check.}
\label{tab:positioning}
\end{table}

\paragraph{What is new.} PRIME does not claim to originate evidence gating, precommitment, or cost-aware probing. Evidence-gated systems already admit claims only on attested evidence under contracts fixed in advance \citep{researchloop2026,prooforstop2026,egvar2026}, and HESP combines cost-aware probing with evidence-backed stopping \citep{liu2026hesp}. To our knowledge, PRIME is the first to use one committed set of alternative routes for two jobs: deciding which \emph{graded} conclusion the recorded checks authorize, and finding which \emph{still-live} route could authorize a stronger one within budget (Table~\ref{tab:positioning}). The ingredients are classical. Minimal routes are prime implicants \citep{quine1952}, as in diagnosis and truth maintenance \citep{reiter1987,dekleer1986atms}, and budgeted checking is sequential testing \citep{unluyurt2004sequential}. Appendix~\ref{app:related} discusses related work at length.

\paragraph{Advantages.} PRIME uses the same fixed evidence routes to guide checking and authorize conclusions. Every reported conclusion can be audited against its program and evidence ledger.

\paragraph{Costs.} PRIME requires a review program, supplied or generated. Its guarantees concern authorization, not the correctness of the rules or observations (Theorem~\ref{thm:decomposition}). Verifying model judgments adds model calls. With known fields and deterministic checks, scripted checking is a viable alternative; agent-directed checking is relevant when evidence must first be located.

\section{SUFFICE}
\label{sec:bench}

\paragraph{Motivation.} $\tau$-bench scores final database outcomes \citep{yao2024taubench}, Partial Evidence Bench addresses inaccessible evidence \citep{tallam2026partial}, and the Evidence Sufficiency Benchmark tests answering or abstaining with supplied evidence \citep{zhang2026sufficiency}. None jointly evaluates budgeted evidence collection and graded conclusions authorized by completed alternative routes. SUFFICE measures both failures of Section~\ref{sec:intro}, unsupported and missed conclusions.

\paragraph{Construction.} SUFFICE contains 938 tasks built from public records in the Open Source Vulnerabilities (OSV) database, ClinicalTrials.gov registrations, openFDA recalls, and financial filings in the U.S. Securities and Exchange Commission's Electronic Data Gathering, Analysis, and Retrieval (EDGAR) system. A common structure supports comparisons across record schemas. Each of 120 task families fixes a request, three strength-ordered conclusions, routes, and field predicates. The generator evaluates predicates on records to compute the intended standard $f^\star_x$ and checks that resolvers reproduce those values. To block routes, the generator sets required values that records do not meet, occasionally synthetic values outside a field's valid range, and withholds some evidence (Appendix~\ref{app:setup}). An LLM coding assistant drafted candidate fields and conclusion descriptions, but checks are executable.

\paragraph{Splits.} The 938 tasks comprise 480 balanced, 218 natural-prevalence, 192 development, and 48 validation tasks, and development and validation use 60 families disjoint from the held-out ones. The balanced split has 120 tasks per domain, 396 with at least one supported conclusion, and equal numbers of nested tasks, whose successive conclusions use extending routes, and branching tasks, where one conclusion has two alternative sufficient routes. The natural-prevalence split keeps each record's own evidence without constructed failures and is not analyzed here. Two held-out variants are counted separately, a locator-uncertain copy of the balanced split with three candidate fields per atom and a 240-task split that replaces selected field checks with text judgments. Every task leaves $\Phi_x$ empty.

\paragraph{Scoring.} \emph{False authorization} is the share of evaluated episodes reporting a conclusion unsupported by source records. \emph{Premature conclusions}, counted the same way, lack a recorded complete route to themselves or a stronger conclusion, even if factually true. False authorization is judged against the source records, and premature reporting against the recorded checks. \emph{Supported utility} scores the strongest source-supported conclusion reported as 33.3, 66.7, or 100, and zero otherwise. \emph{Emission} is the share of answerable tasks, those supporting some conclusion, that receive one. A direct-answer baseline using Qwen2.5-32B-Instruct without tools scores 30.3 utility and 42.1\% false authorization on the balanced split, showing why utility alone does not establish checked support. Executable requirements, not an LLM judge, determine all scores.

\paragraph{Statistics.} The information and guidance comparisons of Sections~\ref{sec:exp:ladder} and~\ref{sec:exp:guidance} on the balanced split were registered before confirmatory evaluation. We use task families for inference, paired sign-flip tests, 95\% bootstrap confidence intervals (CIs), and Holm correction within registered groups. Other analyses, including premature-conclusion rates added afterward, are secondary. Tables report pooled rates. The corpus, scorer, generator, runtime, agents, and logs are public at the links in Section~\ref{sec:conclusion}. Appendix~\ref{app:setup} details construction and splits, and Appendix~\ref{app:protocol} details statistical procedures.

\section{Experiments}
\label{sec:experiments}

We ask which parts of PRIME prevent unsupported reports and which help agents reach stronger conclusions when evidence is costly or uncertain. Unless noted, experiments use Qwen2.5-32B-Instruct \citep{qwen2024qwen25} on the 480-task balanced split with a budget of 20 tool requests, which include reporting requests. Total model calls and tokens are not matched. Appendix~\ref{app:setup} gives the complete charging rule.

\subsection{Information versus enforcement}
\label{sec:exp:ladder}

\begin{table}[t]
\centering
\fontsize{8}{9.5}\selectfont
\setlength{\tabcolsep}{2.6pt}
\begin{tabular}{lrrrrr}
\toprule
Agent & \makecell[r]{False\\auth.\,\%\,$\downarrow$} & \makecell[r]{Prem.\\\%\,$\downarrow$} & \makecell[r]{Against\\ledger\,$\downarrow$} & \makecell[r]{Util.\\$\uparrow$} & \makecell[r]{Emit\\\%\,$\uparrow$} \\
\midrule
ReAct & 13.0 & 56.9 & 6 & 17.9 & 59.0 \\
\quad +verify & 19.9 & 47.6 & 3 & 10.0 & 49.6 \\
\quad +spec+verify & 22.3 & 33.8 & 55 & 26.9 & 89.9 \\
\quad +program+verify & 22.3 & 25.2 & 56 & 32.7 & 98.5 \\
Fully informed agent & 26.7 & 28.3 & 67 & 38.0 & 99.0 \\
PRIME, no guidance & 0.0 & 0.0 & 0 & 30.1 & 99.0 \\
PRIME & 0.0 & 0.0 & 0 & 45.5 & 100.0 \\
\bottomrule
\end{tabular}
\caption{Information versus enforcement (480 tasks). +verify adds an instruction to verify, +spec the specification view (each atom and its field), +program the routes, and the fully informed agent its ledger. Up to four incomplete episodes were excluded per row. Against ledger counts claims with a recorded failed check on every route. Prem.\ is premature reporting, and Emit is reporting among answerable tasks.}
\label{tab:main}
\end{table}

Does showing an agent the requirements and its checked evidence prevent unsupported reporting, or is external enforcement necessary? We progressively reveal verification instructions, checks, routes, and the ledger, then add PRIME's authorization gate and guidance.

\textbf{More information did not stop unsupported reporting, but the authorization gate did.} Table~\ref{tab:main} shows false authorization rising from 13.0\% for ReAct \citep{yao2023react} to 26.7\% for the fully informed agent, which sees the routes and its own ledger but is not enforced and reported 67 conclusions against that ledger. Adding the gate reduced false authorization to 0.0\%, with or without guidance. On SUFFICE, Theorems~\ref{thm:safety} and~\ref{thm:decomposition} guarantee this zero, because the program is the reference standard and the resolvers reproduce its atom values.

\begin{table}[t]
\centering
\fontsize{8}{9.5}\selectfont
\setlength{\tabcolsep}{2.4pt}
\begin{tabular}{@{}lrrrrr@{}}
\toprule
& \multicolumn{2}{c}{Fully informed} & \makecell[r]{FA\\removed} & \multicolumn{2}{c}{Util.\ with gate\,$\uparrow$} \\
\cmidrule(lr){2-3}\cmidrule(lr){5-6}
Model & FA\,\%\,$\downarrow$ & Util.\,$\uparrow$ & [95\% CI] & no guid. & PRIME \\
\midrule
Qwen2.5-32B & 26.7 & 38.0 & 28.0 [22.7, 33.5] & 30.1 & 45.5 \\
gpt-oss-20b & 17.7 & 30.5 & 18.0 [13.5, 22.7] & 26.9 & 58.5 \\
Mistral-24B & 19.4 & 35.2 & 19.5 [14.9, 24.2] & 29.3 & 36.6 \\
Qwen2.5-7B & 0.4 & 1.0 & 0.5 [0.0, 2.1] & 24.7 & 26.1 \\
\bottomrule
\end{tabular}
\caption{Other models (480 tasks, 20 calls). Mistral-24B is Mistral-Small-24B. FA is false authorization, and FA removed is the family-mean reduction from the gate. With the gate, every model had 0\% false authorization.}
\label{app:tab:transfer}
\end{table}

\textbf{The result held across models.} Table~\ref{app:tab:transfer} shows that the gate removed 18.0 and 19.5 points of false authorization for gpt-oss-20b \citep{openai2025gptoss} and Mistral-Small-24B \citep{mistral2025small3}. Guidance raised utility for every model, by 1.4 to 31.6 points. Qwen2.5-7B shows where the gate has little to remove: without it, the agent stated a conclusion on only 3.8\% of answerable tasks. Under sampled decoding, false authorization was 25.4--26.9\% without the gate and 0\% with it (Appendix~\ref{app:res:a}).

\subsection{Guidance and evidence location}
\label{sec:exp:guidance}
\label{sec:exp:locator}

Does the authorization gate cause agents to stop before gathering enough evidence for stronger conclusions? We test whether guidance and completion recover those conclusions, and whether agent-directed search beats fixed-order checking (blind enumeration) when each condition has three possible evidence locations.

\begin{figure}[tb]
\centering
\includegraphics[width=\columnwidth]{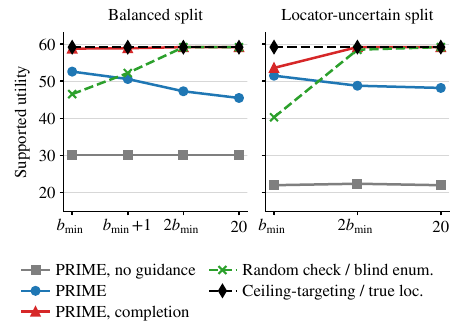}
\caption{Supported utility by budget with known (left) or uncertain (right) evidence locations. Dashed curves denote scripted checking policies (Appendix~\ref{app:setup}). Right-panel completion replays logged episodes without model calls. $b_{\min}$ is the cost of the cheapest committed route to the strongest conclusion before checking.}
\label{fig:budget}
\end{figure}

\textbf{With the gate alone, agents stopped before obtaining stronger supported conclusions.} Figure~\ref{fig:budget} (left) shows utility of 30.1 without guidance and 45.5 with guidance at 20 tool requests. Completion reached 59.2, the maximum for this split, by $2b_{\min}$. Preventing unsupported reports therefore does not ensure that useful remaining checks are performed.

\textbf{When evidence fields were uncertain, choosing where to check helped under tight budgets.} Figure~\ref{fig:budget} (right) shows that at $b_{\min}$ PRIME scored 51.5 against 40.3 for blind enumeration, while a policy given the correct fields (true locators) scored 59.2. At $2b_{\min}$, blind enumeration reached 58.5 but PRIME fell to 48.8 after early stopping, while replayed completion restored 59.2. The benefit of agent-directed search therefore depends on the budget and on the model (Appendix~\ref{app:res:c}).

\subsection{Judging textual evidence}
\label{sec:exp:text}

Can the authorization gate prevent unsupported reporting when an LLM, rather than a deterministic resolver, judges the evidence? We replace selected checks with text judgments on 240 tasks, scored against structured registry fields withheld from the agent, and test requiring a second, agreeing judgment before accepting an observation.

\begin{table}[t]
\centering
\fontsize{8}{9.5}\selectfont
\setlength{\tabcolsep}{2.5pt}
\begin{tabular}{lrrrrr}
\toprule
Agent & \makecell[r]{FA\\\%\,$\downarrow$} & \makecell[r]{Util.\\$\uparrow$} & \makecell[r]{Adm.\\att.} & \makecell[r]{Res.\\err.\,\%\,$\downarrow$} & \makecell[r]{Ep.\\err.\,\%\,$\downarrow$} \\
\midrule
Fully informed agent & 4.2 & 58.5 & 210 & 9.5 & 8.3 \\
PRIME, no guidance & 1.7 & 47.4 & 151 & 13.2 & 8.3 \\
PRIME & 4.6 & 57.5 & 298 & 10.1 & 11.7 \\
PRIME, verified & 0.8 & 55.1 & 222 & 2.7 & 2.5 \\
\bottomrule
\end{tabular}
\caption{Text judgments (240 tasks). FA is false authorization, Adm.\ att.\ counts accepted judgments, Res.\ err.\ is the share of them that are wrong, and Ep.\ err.\ is the share of episodes with such an error.}
\label{tab:text}
\end{table}

\textbf{The gate cannot correct a wrong judgment once the runtime accepts it as evidence.} Table~\ref{tab:text} shows false authorization of 4.6\% for PRIME, falling to 0.8\% with verification, while supported utility decreased from 57.5 to 55.1 and verification required 1.2 additional model calls per episode, outside the tool-request budget. Guidance improved utility but increased false authorization because it prompted more judgments. Runtime authorization thus depends on the accuracy of the recorded evidence.

\subsection{Closest control systems}
\label{sec:exp:extsys}

\begin{table}[t]
\centering
\fontsize{8}{9.5}\selectfont
\setlength{\tabcolsep}{1.9pt}
\begin{tabular}{@{}lrrrrr@{}}
\toprule
System & \makecell[r]{False\\auth.\,\%\,$\downarrow$} & \makecell[r]{Prem.\\\%\,$\downarrow$} & \makecell[r]{Util.\\$\uparrow$} & \makecell[r]{Emit\\\%\,$\uparrow$} & \makecell[r]{Reads\\$\downarrow$} \\
\midrule
Fully informed agent & 26.7 & 28.3 & 38.0 & 99.0 & 3.1 \\
\midrule
AgentSpec \citeyearpar{wang2026agentspec} \\
\quad stop & 0.0 & 0.0 & 38.7 & 87.4 & 3.0 \\
\quad self-examine & 0.0 & 0.0 & 40.1 & 92.9 & 3.1 \\
ResearchLoop \citeyearpar{researchloop2026} & 0.0 & 0.0 & 47.7 & 97.0 & 3.2 \\
Proof-or-Stop \citeyearpar{prooforstop2026} & 0.0 & 0.0 & 52.0 & 99.2 & 4.2 \\
GLEAN \citeyearpar{zhang2026glean} & 14.4 & 14.6 & 35.0 & 83.3 & 3.1 \\
HESP \citeyearpar{liu2026hesp} & 0.0 & 11.7 & 59.2 (49.0) & 100.0 & 3.9 \\
\midrule
PRIME & 0.0 & 0.0 & 45.5 & 100.0 & 2.8 \\
PRIME with completion & 0.0 & 0.0 & 59.2 (59.2) & 100.0 & 3.6 \\
\bottomrule
\end{tabular}
\caption{Adapted systems, sharing tools and requirements, on 480 tasks. Parentheses in Util.\ show scores supported by each agent's recorded checks (HESP and PRIME with completion). Reads counts fields per task.}
\label{tab:extsys}
\end{table}

Does PRIME improve on existing verification and control systems given the same evidence requirements? We compare PRIME with adaptations of AgentSpec \citep{wang2026agentspec}, ResearchLoop \citep{researchloop2026}, Proof-or-Stop \citep{prooforstop2026}, GLEAN \citep{zhang2026glean}, and HESP \citep{liu2026hesp}, reimplemented on PRIME's tool loop because most have no usable public code, with GLEAN applied as a verifier to the fully informed agent's logged runs. They are not exact reruns of the original systems, and they share the model, tools, requirements, resolvers, and budget but not the number of model calls.

\textbf{Some controls prevented unsupported conclusions but sacrificed supported utility.} Table~\ref{tab:extsys} shows zero false authorization for PRIME and every adapted system except GLEAN (14.4\%). AgentSpec, ResearchLoop, and Proof-or-Stop reached utility of 38.7--52.0, against 45.5 for PRIME and 59.2 for PRIME with completion. HESP also scored 59.2, but its conclusion was premature on 11.7\% of tasks. When only conclusions established by recorded checks receive credit, HESP scored 49.0 versus 59.2 for PRIME with completion.

Appendix~\ref{app:extsys} details the adapted controllers and other budgets.

\subsection{Requirements generated from a regulation}
\label{sec:exp:regulation}

\begin{table}[t]
\centering
\footnotesize
\setlength{\tabcolsep}{2pt}
\begin{tabular}{lrrrr}
\toprule
Policy & $n$ & \makecell[r]{Unsupported \%\\{}[95\% CI]\,$\downarrow$} & \makecell[r]{Util.\\$\uparrow$} & \makecell[r]{Cov.\\\%\,$\uparrow$} \\
\midrule
\multicolumn{5}{l}{\emph{Assistant-built program} (526 instances)} \\
\quad Ceiling-targeting & 1,999 & 1.8 [1.3, 2.5] & 18.4 & 26.3 \\
\quad Ceiling-targeting & 600 & 1.7 [0.9, 3.0] & 19.0 & 26.8 \\
\quad Fully informed agent & 600 & 0.2 [0.0, 0.9] & 13.1 & 26.8 \\
\quad PRIME & 600 & 0.2 [0.0, 0.9] & 9.9 & 26.8 \\
\quad PRIME, completion & 600 & 1.7 [0.9, 3.0] & 19.0 & 26.8 \\
\multicolumn{5}{l}{\emph{Protocolized program} (1,999 instances)} \\
\quad Ceiling-targeting & 1,999 & 3.5 [2.8, 4.4] & 47.5 & 92.4 \\
\quad Ceiling-targeting & 600 & 3.2 [2.0, 4.9] & 47.7 & 92.5 \\
\quad Fully informed agent & 600 & 0.0 [0.0, 0.6] & 31.2 & 92.5 \\
\quad PRIME & 600 & 0.3 [0.1, 1.2] & 32.3 & 92.5 \\
\quad PRIME, completion & 600 & 3.2 [2.0, 4.9] & 47.7 & 92.5 \\
\bottomrule
\end{tabular}
\caption{Held-out regulatory evaluation. Cov.\ is the share of registrations with a stated conclusion. Tracker levels are 3 for late, 2 for due, and 1 otherwise. Scripted checks use 1,999 registrations, agents 600, and missing instances count as abstentions. Full-evidence agreement is 22.8\% (assistant-built) and 81.6\% (protocolized).}
\label{app:tab:specbuild}
\end{table}

Can PRIME run requirements that were not written for the benchmark? We generated programs from the U.S. clinical-trial results-reporting regulation in the Code of Federal Regulations (CFR), 42 CFR Part~11, and applied them to ClinicalTrials.gov registrations. A program can conclude that a trial must report results ($c_1$), that they are due ($c_2$), or that they were submitted late ($c_3$). For the \emph{protocolized} program, Qwen2.5-32B-Instruct wrote candidates from the regulation text, and one was selected on 600 validation registrations and frozen. An LLM coding assistant wrote the \emph{assistant-built} program. The reference is an independent reporting-status tracker \citep{devito2018trialstracker,devito2020compliance} that reads the same registry, and a conclusion is unsupported when it is stronger than the tracker's verdict. We test both programs on 1,999 held-out registrations from other sponsors, with agents on 600 of them (Appendix~\ref{app:fdaaa}).

\textbf{Generated programs ran unchanged, and their errors came mainly from their rules.} Table~\ref{app:tab:specbuild} shows that PRIME with completion, using the protocolized program, stated a conclusion for 92.5\% of registrations with 3.2\% unsupported, matching the scripted ceiling-targeting policy (Appendix~\ref{app:setup}). Without completion, the agent stopped at weaker conclusions (utility 32.3 against 47.7), as on the benchmark. Consistent with Theorem~\ref{thm:decomposition}, most unsupported conclusions trace to the program. It omits the regulation's deadline extensions for filed certifications, and 55 of the 70 unsupported conclusions on the 1,999 registrations involve such a certification. The assistant-built program agreed with the tracker on 98.5\% of the registrations it classified, but it classified only 26.3\%.

\section{Conclusion}
\label{sec:conclusion}

PRIME treats evidence-based review as budgeted evidence acquisition under a hard authorization constraint. One committed set of evidence routes both gates what the agent may report and directs it to the checks that could justify a stronger conclusion. The gate holds for every agent policy, and the remaining error comes only from the rules and the recorded observations. With correct rules and exact checks at known locations, completion reaches the strongest supported conclusion once the budget covers the checks on the relevant routes. Empirically, PRIME eliminated false authorization with supplied programs and deterministic checks, guidance and completion recovered the conclusions that the gate alone missed, and model-judged checks showed why the accuracy of admitted observations matters.\label{end:main}

\paragraph{Reproducibility.}
{\raggedright
SUFFICE code and data: \url{https://github.com/HamedKhosravi99/SUFFICE};
PRIME code and episode logs: \url{https://github.com/HamedKhosravi99/PRIME}.\par}

\section*{Limitations}

PRIME's guarantees assume faithful evidence recording and a trusted runtime and renderer. They cover authorization, not the correctness of the requirements or observations (Section~\ref{sec:positioning}). SUFFICE evaluates authorization using executable requirements over four public-record domains, rather than independently validated organizational policies. Confirmatory results use one primary model, Qwen2.5-32B-Instruct, with one greedy run per task, and only this model ran the text-judgment and regulation experiments. Qwen2.5-7B, a registered secondary model, repeated the enforcement and guidance comparisons, as did gpt-oss-20b and Mistral-Small-24B in secondary analyses without multiplicity correction. SUFFICE tasks use empty $\Phi_x$, leaving constraint closure and conflict handling to runtime unit tests. The regulation study compares generated programs with an independent registry tracker rather than legal adjudication. Agent-directed search for where model-judged evidence is located remains untested. Theorem~\ref{thm:ceiling} assumes correct programs, exact checks, and known fields, and free cached checks preclude a general termination guarantee.

\section*{Ethical Considerations}

PRIME targets a risk of using LLM agents for record review, reporting conclusions that their checks do not justify. It refuses such conclusions, and every conclusion it reports can be audited against its program and evidence ledger. Its guarantees concern authorization, so an incorrect rule or observation can still produce an incorrect conclusion (Theorem~\ref{thm:decomposition}). Deployments in clinical, security, financial, or regulatory settings should therefore validate their programs and keep a person responsible for consequential decisions. The regulation study measures agreement with a public tracker and makes no legal determination about any trial or sponsor. All records are public, and Appendix~\ref{app:setup} describes the personal details that some of them contain.

\bibliography{prime_naacl,prime_intro_new}

\clearpage
\appendix
\paragraph{Organization of the appendix.} Appendices~\ref{app:formal} and~\ref{app:proofs} support the formulation, method, and guarantees of Sections~\ref{sec:problem} to~\ref{sec:guarantees}. Appendix~\ref{app:formal} gives the full interaction model that Section~\ref{sec:problem} simplifies, the premises T1--T4 behind the guarantees, program validation, and conformance tests of the runtime and renderer. Appendix~\ref{app:proofs} states PRIME as Algorithm~\ref{alg:prime}, proves Theorems~\ref{thm:safety} to~\ref{thm:ceiling}, and describes a machine check of Theorem~\ref{thm:ceiling} against the runtime's code. Appendices~\ref{app:setup} to~\ref{app:results} support the benchmark and experiments of Sections~\ref{sec:bench} and~\ref{sec:experiments}. Appendix~\ref{app:setup} describes the records and splits, the running example, the agents and models, the charging rule, and the scripted policies. Appendix~\ref{app:protocol} gives the statistical protocol. Appendix~\ref{app:results} reports additional results in the order of Section~\ref{sec:experiments}: the information ladder (\ref{app:res:a}), guidance (\ref{app:res:b}), field choice (\ref{app:res:c}), text judgments (\ref{app:res:d}), and the closest systems (\ref{app:extsys}). Appendix~\ref{app:fdaaa} runs PRIME on programs generated from a clinical-trial regulation. Appendix~\ref{app:pilot} reports an earlier exploratory pilot whose results are not pooled with the others. Appendix~\ref{app:related} extends the comparison of Section~\ref{sec:positioning}.

\section{Formal Details and Implementation Checks}
\label{app:formal}

\paragraph{Intended minimal routes.}
Let $\cP_x^\star(c)$ denote the minimal sufficient routes for conclusion $c$. Formally,
\begin{equation}
\begin{aligned}
\cP_x^\star(c)=\operatorname{Min}_{\Phi_x}\{p\mid{}&\Phi_x\land p\text{ is satisfiable},\\
&p\models_{\Phi_x}f_x^\star(c,\cdot)\}.
\end{aligned}
\label{eq:intended_routes}
\end{equation}
Here, $p\models_{\Phi_x}f_x^\star(c,\cdot)$ means that every assignment satisfying $p$ and $\Phi_x$ justifies $c$. The operator $\operatorname{Min}_{\Phi_x}$ keeps only the routes from which no condition can be removed without losing sufficiency under $\Phi_x$. We assume the intended standard respects the strength order, so $c\preceq_x d$ and $f_x^\star(d,\omega)=1$ imply $f_x^\star(c,\omega)=1$ for every permitted $\omega$.

\paragraph{Full interaction model.} Section~\ref{sec:problem} states the problem in its simplest form. The full model used by the runtime and the proofs is as follows. A case is $x=(S_x,V_x)$, where the \emph{evidence view} $V_x$ holds the record's field values and the agent initially knows only $S_x$. Each atom $e$ has a \emph{resolver} $\sigma_x(e)$, the registered procedure that checks it. A formula $\Phi_x$ over the atoms expresses logical constraints, for example that a reference cannot have a URL unless the reference exists, and $\omega_x$ satisfies $\Phi_x$. The ledger has a fourth state $\stC$ for atoms whose admitted observations disagree. Let $\Cl_{\Phi_x}(z)$ contain the literals established in $z$ together with those $\Phi_x$ implies, excluding conflicting atoms (definition D1 in Appendix~\ref{app:proofs}). Equation~\eqref{eq:authorization} then reads $\Cl_{\Phi_x}(z)\models p$ in place of $z\models p$, and nothing is authorized when the established literals violate $\Phi_x$. With $\Phi_x$ empty and no conflicts, as in every benchmark task, the two definitions coincide. The checking actions $\mathrm{Chk}_x$ comprise field reads, text reads, and attestations, in which a model or person records a judgment as evidence. After history $h_t$, request $a$ costs $\kappa_x(a,h_t)\ge0$. A cached repeat, with matching tool arguments and evidence-read state, is free and returns its recorded result. A check $a_t$ with observation $o_t$ updates the ledger by $z_{t+1}=\Delta_x(z_t,a_t,o_t)$, and observations may be wrong. $\EMIT(c)$ is charged whether or not the runtime accepts it, and $\textsc{stop}$ is free. With remaining budget $b_t$, starting from $b_0=B_x$, the affordable requests are
\begin{equation*}
\begin{aligned}
\cA_t={}&\{a\in\mathrm{Chk}_x\cup\{\EMIT(c)\mid c\in C\}\mid\\
&\quad\kappa_x(a,h_t)\leq b_t\}\cup\{\textsc{stop}\}.
\end{aligned}
\end{equation*}
Other requests are refused without changing the ledger but may be charged. The utility $U(c,x)$ depends only on $c$ and $S_x$, so the runtime can compute it before checking. The planning estimate $\hat\kappa_x(p,z)$ of Equation~\eqref{eq:target} counts a check that settles several literals once, and it may omit the effort of locating a field whose location is uncertain. Every actual check is still charged. The \emph{hold indicator} $\mu$ equals $1$ when $U(c^\star,x)$ exceeds $U(c,x)$ for every $c\in\Auth_{P_0}(z)\cup\{\bot\}$, and $0$ otherwise. PRIME holds weaker reports and issues reminders only while $\mu=1$. The \emph{completion setting} $\gamma\in\{0,1\}$ turns completion on, and when an atom's field is uncertain, completion tries the supplied candidate fields. If checking ends without an accepted request, PRIME reports the set $\MaxAuth_{P_0}(z)$ of maximal authorized conclusions, a single conclusion in the chain setting. The structured result $y(c,z)$ records $c$ and its supporting ledger evidence, and a deterministic renderer $\Render$ turns it into text using fixed templates.

\paragraph{Commitment.} Before checking begins, the runtime validates the program (Program validation, below), rejects an invalid one, records the hash of an accepted one, and only then enables resolvers. Validation tests the program's structure, not whether its rules are correct. In the experiments, each task file supplies the program, the runtime validates it when the task is loaded, and the program cannot change during the episode. The hash-gated commitment and the template renderer are exercised by the conformance tests of Table~\ref{app:tab:conformance}, and the experiment runner scores the structured output directly.

\paragraph{Decision problem.} A \emph{policy} $\pi_\theta$, parameterized by model parameters $\theta$, chooses actions using the specification $S_x$, fixed program $P_0$, and history $h_t$ of previous actions and observations. At termination step $\tau$, the runtime either reports conclusion $c_\tau$ or abstains, written $c_\tau=\bot$. With $f_x^\star(\bot,\omega_x)=0$, the agent seeks a policy maximizing expected utility, with unsupported conclusions receiving no reward.
\begin{equation}\label{eq:decision-problem}
\begin{aligned}
\max_{\pi_\theta}\quad &\mathbb E_{\pi_\theta,\sigma_x}\!\left[U(c_\tau,x)f_x^\star(c_\tau,\omega_x)\right]\\
\mathrm{s.t.}\quad &a_t\in\cA_t,\ b_0=B_x,\ z_0=\stU^m,\\
&b_{t+1}=b_t-\kappa_x(a_t,h_t),\\
&h_{t+1}=h_t\cdot(a_t,o_t),\\
&z_{t+1}=\Delta_x(z_t,a_t,o_t)\ \text{if }a_t\in\mathrm{Chk}_x,\\
&z_{t+1}=z_t\ \text{otherwise},\\
&c_\tau\in\Auth_{P_0}(z_\tau)\cup\{\bot\}.
\end{aligned}
\end{equation}
The expectation covers policy choices and resolver observations. The constraints limit actions to affordable requests and require recorded evidence to authorize the reported conclusion. Without authorization, the agent may report unsupported conclusions.

\paragraph{Partial orders.} The decision problem above writes a single stated conclusion $c_\tau$, which is exact when $\preceq_x$ is a chain. For partially ordered conclusions, an extension of the main formulation, $\MaxAuth_{P_0}(z)$ is an antichain. The runtime's final output $y(\MaxAuth_{P_0}(z),z)$ lists every maximal authorized conclusion with its established route, the renderer states each as a separate finding, the last constraint of~\eqref{eq:decision-problem} is read as $c\in\Auth_{P_0}(z_\tau)$ for each stated $c$, and utility is the maximum over the stated set. Theorems~\ref{thm:safety} and~\ref{thm:decomposition} hold in this form, since neither proof compares distinct maximal elements, and Theorem~\ref{thm:ceiling} assumes a chain.

\paragraph{Residual status and conflicts.} The runtime gives each literal of a route exactly one status, which in order of precedence is satisfied, repairable (a conflict or a retractable opposing attestation blocks it), open (a check can establish it), or impossible (a terminal unavailability certificate). Precedence resolves overlaps, and if no status applies, the runtime raises an error instead of guessing. If the settled atoms contradict $\Phi_x$, it authorizes nothing, suspends route statuses, and exposes the contradicting attestations for review, never the constraint formula, so Equations~\eqref{eq:residual} and~\eqref{eq:target} apply to consistent ledgers only. A formula unsatisfiable on its own is rejected before commitment. Neither failure state, no applicable status or settled atoms that contradict $\Phi_x$, arises in the structured splits, where $\Phi_x$ is empty and each atom has one deterministic resolver.

\paragraph{Output and trusted components.} The structured output lists the authorized conclusions, the atoms on their routes, the unresolved stronger conclusions with their residual requirements, the scope, and qualifications. The map $\denote$, which returns the conclusions a rendered text asserts (D4 in Appendix~\ref{app:proofs}), reads only the zone of authorized findings. Unresolved stronger conclusions appear in a separate zone that lists their missing checks and asserts nothing. Each rendered finding uses one of 13 templates whose slots accept identifiers, numbers, dates, or a quoted span of at most 200 characters drawn only from letters, digits, spaces, parentheses, and the characters \texttt{,.;:/\_'-}, so a slot can never forge a finding or a conclusion. The guarantees assume (T1) a finite list of conclusions with a mechanically checkable order, (T2) a constraint formula supplied from outside and frozen for the episode, (T3) output produced only by the sound renderer, and (T4) a ledger changed only by faithfully recording, under the conflict rule of (D1) in Appendix~\ref{app:proofs}, the values the runtime admits from resolvers or identified attesters, or by retracting such a record. Exactly resolved atoms receive the full guarantee. For atoms attested by a person or a model, it covers only a faithful record of the attestation, and their errors enter $\Ev{res}$. Table~\ref{app:tab:conformance} reports the implementation checks of these premises.

\paragraph{Program validation.} Before commitment the runtime rejects programs whose conclusions, atoms, or routes are malformed, whose atoms name no registered resolver or embedded record, whose routes are unsatisfiable together with $\Phi_x$ or strictly contain another route of the same conclusion, or whose routes are not order-coherent, that is, some route of a stronger conclusion contains no route of a weaker comparable one. A check whose field path or predicate cannot be evaluated establishes nothing at run time, and in our builds a separate linter caught such bindings before commitment.

\begin{table*}[t]
\centering
\small
\begin{tabular}{lccccc}
\toprule
Condition & \makecell{Mask\\violations} & \makecell{Conflict\\fail-open} & \makecell{Premature\\conclusion} & \makecell{Renderer\\strengthening} & \makecell{Retraction\\failure} \\
\midrule
PRIME runtime & 0 & 0 & 0 & 0 & 0 \\
\multicolumn{6}{l}{\footnotesize Templates: 52 of 52 round trips exact; strengthening mutations: 9 of 9 rejected; slot injections: 8 of 8 blocked.} \\
\midrule
Commitment input & \multicolumn{5}{l}{Share of the 70 pairs with different committed programs} \\
\midrule
Specification view only (production) & \multicolumn{5}{l}{0.000} \\
Plus each atom's authored description & \multicolumn{5}{l}{0.429} \\
Plus the true values and the answer key & \multicolumn{5}{l}{0.786} \\
\bottomrule
\end{tabular}
\caption{Runtime and renderer tests include 52 round trips, 9 mutations, 8 injections, and 27 renderer cases. Violation counts come from a scripted episode with 3 conclusions, 4 atoms, 1 conflict, and 1 retraction. Commitment tests compare 70 hidden-evidence pairs across 5 seeds and 14 perturbation types. Deliberately leaky inputs confirm detection.}
\label{app:tab:conformance}
\end{table*}

\section{Proofs of the Results in Section~\ref{sec:guarantees}}
\label{app:proofs}

Algorithm~\ref{alg:prime} restates the procedure of Section~\ref{sec:method} as pseudocode; the proofs cite its lines.

\begin{algorithm}[H]
\caption{One PRIME review. Inputs: $S_x$, $B_x$, and completion setting $\gamma$.}
\label{alg:prime}
\footnotesize
\begin{algorithmic}[1]
\State $P_0\gets G(S_x,r)$; validate $P_0$, reject if invalid; record $\mathrm{hash}(P_0)$ and enable checks
\State $z\gets\stU^m$; $b\gets B_x$; $h\gets\emptyset$; $n\gets0$ \Comment{$n$ counts reminders}
\State \textbf{while} $b>0$ \textbf{do}
\State \quad compute $\Auth_{P_0}(z)$; $(c^\star,p^\star)$ by Equation~(\ref{eq:target}); $\mu\gets1$ if $U(c^\star,x)>\max_{c\in\Auth_{P_0}(z)\cup\{\bot\}}U(c,x)$, else $0$ \label{alg:auth}
\State \quad $a\gets\pi_\theta(S_x,P_0,z,h,b,\Auth_{P_0}(z),c^\star,R(p^\star,z))$
\State \quad \textbf{if} $\kappa_x(a,h)>b$ \textbf{then} refuse $a$; end checking \label{alg:afford}
\State \quad $b\gets b-\kappa_x(a,h)$ \label{alg:charge}
\State \quad \textbf{if} $a=\EMIT(c)$ \textbf{then} recompute $(c^\star,p^\star)$ and $\mu$ as in line~\ref{alg:auth} with the charged $b$ \label{alg:remu}
\State \quad \textbf{if} $a\in\mathrm{Chk}_x$ is a registered check with allowed parameters \textbf{then} run its resolver, or return its recorded result if $a$ repeats a cache-eligible request; update $z$, $h$ \label{alg:check}
\State \quad \textbf{else if} $a=\EMIT(c)$, $c\in\Auth_{P_0}(z)$, and $\mu=0$ \textbf{then} \textbf{return} $\Render(y(c,z))$ \label{alg:accept}
\State \quad \textbf{else if} $a=\EMIT(c)$, $c\in\Auth_{P_0}(z)$, and $\mu=1$ \textbf{then} hold the request; update $h$ \label{alg:hold}
\State \quad \textbf{else if} $a=\textsc{stop}$, $\mu=1$, and $n<3$ \textbf{then} remind the agent; $n\gets n+1$; update $h$ \label{alg:remind}
\State \quad \textbf{else if} $a=\textsc{stop}$ \textbf{then} \textbf{if} $\mu=1$ and $\gamma=1$, run the open checks of $p^\star$ while $\mu=1$ and $b>0$, recomputing $(c^\star,p^\star)$ and $\mu$ after each and charging each as in line~\ref{alg:charge}; end checking \label{alg:complete}
\State \quad \textbf{else} refuse $a$; update $h$ without changing $z$ \label{alg:refuse}
\State \textbf{return} $\Render(y(\MaxAuth_{P_0}(z),z))$, a single conclusion in a chain, or abstain if empty \label{alg:final}
\end{algorithmic}
\end{algorithm}

Line~\ref{alg:afford} never applies in our benchmark, because every charge is zero or one call and checking ends when the budget reaches zero. Every emission is charged (line~\ref{alg:charge}), so line~\ref{alg:remu} recomputes the target and the hold test with the budget left after that charge, and an emission is held only while a stronger conclusion remains reachable within that budget. The final statement in line~\ref{alg:final} is free, so an authorized conclusion is still reported when the budget runs out, including one whose emission was held. Cached repeats are free, so a policy can repeat them without reducing $b$, and the loop need not terminate for every policy. The guarantees therefore hold for every episode that produces an output.

Each line of a proof chains steps with arrows. A bracket names the premise, result, or line of Algorithm~\ref{alg:prime} that a step uses; steps that only apply a definition or routine algebra carry no bracket.

\paragraph{Definitions used in the proofs.}
(D1) The update rule $\Delta_x$ sets an atom to $\stT$ ($\stF$) when every admitted, unretracted value for it is 1 (0), to $\stC$ when such values disagree, and leaves it $\stU$ when there is none. For a ledger $z$, the settled literals are $L(z)=\{e_i:z_i=\stT\}\cup\{\neg e_i:z_i=\stF\}$ and the conflicting atoms are $K(z)=\{i:z_i=\stC\}$. If $\Phi_x\wedge\bigwedge L(z)$ is unsatisfiable, then $\Cl_{\Phi_x}(z)=\bot$; otherwise $\Cl_{\Phi_x}(z)=\{\ell:\text{the atom of }\ell\notin K(z),\ \Phi_x\wedge\bigwedge L(z)\models\ell\}$. A route $p$ holds at $z$, written $\Cl_{\Phi_x}(z)\models p$, when $\Cl_{\Phi_x}(z)\neq\bot$ and $p\subseteq\Cl_{\Phi_x}(z)$. We abbreviate $\Cl_{\Phi_x}$ to $\Cl$.
(D2) $\MaxAuth_P(z)=\{c\in\Auth_P(z):\text{no }d\in\Auth_P(z)\text{ has }c\prec d\}\subseteq\Auth_P(z)$, where $c\prec d$ means $c\preceq d$ and $c\neq d$.
(D3) In each pass of the loop of Algorithm~\ref{alg:prime}, the runtime computes $\Auth_{P_0}(z_t)$ (line~\ref{alg:auth}) and decides on the proposed action without changing the ledger in between. A check sets $z_{t+1}=\Delta_x(z_t,a_t,o_t)$ (line~\ref{alg:check}), and completion (line~\ref{alg:complete}) applies $\Delta_x$ once per check it runs, so $z_\tau$ is the ledger after completion. A refused request (line~\ref{alg:refuse}), a held emission (line~\ref{alg:hold}), and a reminder (line~\ref{alg:remind}) only extend the history, so $z_{t+1}=z_t$, and output nothing. Every request except $\textsc{stop}$ and a cached repeat of a check or of an unrecognized request reduces $b$, so emissions are always charged, and the episode ends when an emission is accepted, when checking ends, or when the budget is spent. An emission $\EMIT(c)$ is accepted (line~\ref{alg:accept}) only if $c\in\Auth_{P_0}(z_t)$ and $\mu=0$, and acceptance ends the episode. Otherwise the episode ends at a time $\tau$ and the runtime states $\MaxAuth_{P_0}(z_\tau)$ or abstains (line~\ref{alg:final}).
(D4) When an episode ends at ledger $z$, the runtime returns a structured output $y$; $M(y)$ denotes the conclusions $y$ marks as authorized: the stated conclusion and the authorized conclusions $y$ lists, which the runtime takes from $\Auth_{P_0}(z)$. ``PRIME outputs $c$ at $z$'' means $c\in M(y)$. The user sees $s=\Render(y)$ and the conclusions $\denote(s)$.
(D5) $\Ev{mask}=\{M(y)\not\subseteq\Auth_{P_0}(z)\}$, $\Ev{render}=\{\denote(s)\not\subseteq M(y)\}$, and $\Ev{user}=\{\exists c\in\denote(s):f^\star_x(c,\omega_x)=0\}$; $\Ev{prog}$ and $\Ev{res}$ are as in Section~\ref{sec:guarantees}.

Part (i) of Theorem~\ref{thm:safety} in full reads as follows. Let $x=(S,V)$ and $x'=(S,V')$ have the same specification and arbitrary evidence views. If $G$ is invoked only on $S$ and fixed randomness $r$, and no resolver can run before the hash of the committed program is recorded, both cases receive the program $G(S,r)$ with the same hash. If $G$ draws its seed for both cases from one distribution that does not depend on the evidence view, the committed programs have the same distribution.

\begin{proof}[Proof of Theorem~\ref{thm:safety}, part (i)]
Model $G$ as an algorithm that runs in rounds $j=1,2,\dots$. In round $j$, if $\delta_j(S,r,a_1,\dots,a_{j-1})=\mathrm{halt}$, it returns $g_j(S,r,a_1,\dots,a_{j-1})$, and otherwise it sends a resolver request and receives $a_j$. It halts after finitely many rounds. The hash is computed from the program it returns, so every request precedes the recording of the hash, and the gate answers it with $\bot$ whatever the evidence view. Hence, by induction on $j$,
\begin{flalign*}
&a_j=\bot\ \ \forall j&\\
&\Rightarrow J=\min\{j:\delta_j(S,r,\bot^{j-1})=\mathrm{halt}\}&\\
&\Rightarrow P_0=g_J(S,r,\bot^{J-1})=:\hat G(S,r)&\\
&\Rightarrow P_0(x)=\hat G(S,r)=P_0(x')\by{$S_x=S_{x'}$}&\\
&\Rightarrow\mathrm{hash}(P_0(x))=\mathrm{hash}(P_0(x')).&
\end{flalign*}
Neither evidence view appears in these lines, so both are arbitrary. If $G$ is randomized and the cases draw seeds $\rho,\rho'$ from the same distribution $\nu$, which does not depend on the evidence view, then for every set $\mathcal A$ of programs
\begin{flalign*}
&\Pr(P_0(x)\in\mathcal A)=\Pr(\hat G(S,\rho)\in\mathcal A)&\\
&=\nu(\{r:\hat G(S,r)\in\mathcal A\})\by{$\rho\sim\nu$}&\\
&=\Pr(\hat G(S,\rho')\in\mathcal A)\by{$\rho'\sim\nu$}&\\
&=\Pr(P_0(x')\in\mathcal A).&
\end{flalign*}
\end{proof}

\begin{proof}[Proof of Theorem~\ref{thm:safety}, part (ii)]
By (D3) the runtime outputs conclusions only when the episode ends, either with an accepted $\EMIT(c)$ at time $t$ ($z=z_t$) or at a time $\tau$ without one ($z=z_\tau$). Let $L\subseteq\Auth_{P_0}(z)$ be the authorized conclusions the output lists (D4).
\begin{flalign*}
&\text{Case 1: }\EMIT(c)\text{ accepted}&\\
&\Rightarrow c\in\Auth_{P_0}(z_t)\by{lines~\ref{alg:auth},~\ref{alg:accept}}&\\
&\Rightarrow M(y)=\{c\}\cup L\subseteq\Auth_{P_0}(z_t);&\\
&\text{Case 2: }M(y)=\MaxAuth_{P_0}(z_\tau)\cup L&\\
&\qquad\subseteq\Auth_{P_0}(z_\tau)\by{line~\ref{alg:final}; D2}&\\
&\Rightarrow M(y)\subseteq\Auth_{P_0}(z)\text{ always}&\\
&\Rightarrow\Ev{mask}=\varnothing.&
\end{flalign*}
If nothing is authorized at $z_\tau$, the runtime abstains and $M(y)=L=\varnothing$. Neither case used a property of the policy, the actions, the resolver outcomes, the budget, or the stopping time.
\end{proof}

\begin{proof}[Proof of Theorem~\ref{thm:safety}, part (iii)]
For any episode with final ledger $z$, output $y$, and text $s=\Render(y)$,
\begin{flalign*}
&\denote(s)\subseteq M(y)\subseteq\Auth_{P_0}(z)&\\
&\qquad\by{soundness; part (ii)}&\\
&\Rightarrow\denote(s)\subseteq\Auth_{P_0}(z);&\\
&\denote(\Render(y))\subseteq M(y)\ \forall y\by{soundness}&\\
&\Rightarrow\Ev{render}=\varnothing.&
\end{flalign*}
Every evaluative sentence the user sees comes from $s$, so every conclusion the user sees lies in $\Auth_{P_0}(z)$.
\end{proof}

\begin{proof}[Proof of Theorem~\ref{thm:decomposition}]
Let $\mathcal G=\Ev{render}\cup\Ev{mask}\cup\Ev{res}$ and take an episode in $\Ev{user}\setminus\mathcal G$, with final ledger $z$, output $y$, and text $s$:
\begin{flalign*}
&\exists c\in\denote(s):f^\star_x(c,\omega_x)=0&\\
&\Rightarrow c\in M(y)\by{$\notin\Ev{render}$}&\\
&\Rightarrow c\in\Auth_{P_0}(z)\by{$\notin\Ev{mask}$}&\\
&\Rightarrow z\text{ truthful},\ c\in\Auth_{P_0}(z),\ f^\star_x(c,\omega_x)=0&\\
&\qquad\by{$\notin\Ev{res}$; T4}&\\
&\Rightarrow\text{the episode lies in }\Ev{prog}.&
\end{flalign*}
Hence $\Ev{user}\subseteq\Ev{prog}\cup\Ev{res}\cup\Ev{mask}\cup\Ev{render}$, the union bound gives~\eqref{eq:decomp}, and parts (ii) and (iii) of Theorem~\ref{thm:safety} make the last two terms zero. On the complement of $\Ev{res}$, every admitted value agrees with $\omega_x$, so no atom becomes conflicting, and because T4, the faithful-ledger assumption of Appendix~\ref{app:formal}, makes the ledger record exactly the admitted values, every settled atom of the final ledger agrees with $\omega_x$.
\end{proof}

\paragraph{Premises of Theorem~\ref{thm:ceiling}.} The conclusions form a chain and $\Phi_x$ is empty. The program is correct in the sense of Section~\ref{sec:problem}. Checks fall into \emph{call classes}: one call of a class settles every atom of the class, and it returns their values under $\omega_x$ (exact resolvers, known fields). Hence every ledger in the episode is truthful and has no conflicts. Completion (line~\ref{alg:complete} of Algorithm~\ref{alg:prime}) repeats the following step while $\mu=1$ and $b>0$: check an atom of $R(p^\star,z)$, charge one call, and recompute $(c^\star,p^\star)$ and $\mu$. $N(z)$ is the number of call classes that contain an atom of $E^\dagger$ that is unknown at $z$.

\begin{proof}[Proof of Theorem~\ref{thm:ceiling}]
Write $\omega_x\models p$ when every literal of $p$ is true under $\omega_x$. Truthfulness gives $z\models p\Rightarrow\omega_x\models p$, and it also gives that a route with $\omega_x\models p$ is live at every ledger.

\emph{Step 1 (nothing above $c^\dagger$ is ever authorized).} For a truthful $z$,
\begin{flalign*}
&c\in\Auth_{P_0}(z)&\\
&\Rightarrow\exists d\succeq c,\ p\in P_0(d):\ z\models p&\\
&\Rightarrow\omega_x\models p\by{truthful}&\\
&\Rightarrow f^\star_x(d,\omega_x)=1\by{correct $P_0$}&\\
&\Rightarrow c\preceq d\preceq c^\dagger\by{definition of $c^\dagger$}.&
\end{flalign*}
If $c^\dagger=\bot$, then $\Auth_{P_0}(z)=\varnothing$ at every ledger, PRIME abstains, and the claim holds. From here on, $c^\dagger\neq\bot$.

\emph{Step 2 (a true route of $c^\dagger$).} By correctness, $f^\star_x(c^\dagger,\omega_x)=1$ gives $d\succeq c^\dagger$ and $p^\dagger\in P_0(d)$ with $\omega_x\models p^\dagger$. Then $f^\star_x(d,\omega_x)=1$, so $d\preceq c^\dagger$ and $d=c^\dagger$. Thus $p^\dagger\in P_0(c^\dagger)$ is live at every ledger, and its atoms lie in $E^\dagger$.

\emph{Step 3 (invariant $b\ge N(z)$ keeps the target at or above $c^\dagger$).} If $b\ge N(z)$, then
\begin{flalign*}
&\hat\kappa(p^\dagger,z)\le N(z)\le b\by{atoms of $p^\dagger\subseteq E^\dagger$}&\\
&\Rightarrow c^\dagger\in\cR(z,b)\by{Step 2; Eq.~\eqref{eq:target}}&\\
&\Rightarrow U(c^\star)\ge U(c^\dagger)\Rightarrow c^\star\succeq c^\dagger\by{chain}.&
\end{flalign*}

\emph{Step 4 (each completion step preserves the invariant).} Suppose $\mu=1$ and $b\ge N(z)$. Then $c^\star$ is strictly stronger than every authorized conclusion, so $p^\star$ is not complete. It is live, so $R(p^\star,z)\neq\varnothing$ and every literal of $R(p^\star,z)$ has an unknown atom $e$. Because $p^\star\in P_0(c^\star)$ and $c^\star\succeq c^\dagger$ (Step 3), $e\in E^\dagger$. Checking $e$ settles its whole class, which contains an unknown atom of $E^\dagger$, so $N$ falls by exactly one while $b$ falls by one. The invariant holds again, and since $N\ge0$ falls at each step, completion stops after at most $N(z)$ steps.

\emph{Step 5 (the final ledger authorizes $c^\dagger$).} Checking ends at a ledger $z$ with $b\ge N(z)$, and by Step 4 the invariant still holds when completion stops. Completion stops, or is skipped, in one of two cases. If $\mu=0$, some authorized $c$ has $U(c)\ge U(c^\star)$, so $c\succeq c^\star\succeq c^\dagger$ by Step 3, and Step 1 forces $c=c^\dagger$. If $b=0$, then $N(z)=0$, every atom of $p^\dagger$ is settled, truthfully, so $z\models p^\dagger$ and $c^\dagger\in\Auth_{P_0}(z)$. In both cases Step 1 gives $\MaxAuth_{P_0}(z)=\{c^\dagger\}$, and line~\ref{alg:final} reports $c^\dagger$.

For the corollary: a charged agent request that checks an unknown atom of $E^\dagger$ lowers $b$ and $N$ by one each, and cached repeats change neither, so $b\ge N(z)$ holds from $B\ge N(z_0)$ to the end of checking.
\end{proof}

\paragraph{Machine check.} In the tightness example of Section~\ref{sec:guarantees}, each atom is its own call class, $c^\dagger=c_2$, and $E^\dagger=\{e_1,\dots,e_5\}$. Test \texttt{test\_completion\_ceiling.py} in the released runtime checks the theorem against the runtime's own target computation on 500 random programs with up to six atoms and shared call classes. For each program, it covers every truth assignment, an arbitrary prefix of agent checks, and budgets $N(z)$, $N(z)+1$, and $N(z)+3$, and it records instances where $N(z)-1$ fails. It also confirms the example's outcomes at 3, 4, and 5 calls.

\section{Experimental Setup}
\label{app:setup}

This appendix gives the conditions shared by all experiments. Table~\ref{app:tab:overview} lists the studies and the exhibits that report them.

\begin{table*}[t]
\centering
\footnotesize
\setlength{\tabcolsep}{3pt}
\begin{tabular}{>{\raggedright\arraybackslash}p{0.105\textwidth}>{\raggedright\arraybackslash}p{0.205\textwidth}>{\raggedright\arraybackslash}p{0.14\textwidth}>{\raggedright\arraybackslash}p{0.2\textwidth}>{\raggedright\arraybackslash}p{0.13\textwidth}>{\raggedright\arraybackslash}p{0.14\textwidth}}
\toprule
Study & Question & Tasks or records & Agents and policies & What varies & Exhibits \\
\midrule
Information vs.\ enforcement (\S\ref{sec:exp:ladder}) & Does an agent shown the requirements and its ledger follow them? & Balanced split, 480 tasks & Qwen2.5-32B-Instruct; three other models & Information in the prompt; authorization gate & Table~\ref{tab:main}; Figure~\ref{app:fig:ladder}; Table~\ref{app:tab:transfer} \\
Guidance (\S\ref{sec:exp:guidance}) & Does guidance recover the conclusions that enforcement alone misses? & Balanced split & Qwen2.5-32B-Instruct; scripted policies & Guidance, completion, budget & Figure~\ref{fig:budget}, left; Figures~\ref{app:fig:controllaw} and~\ref{app:fig:rejection} \\
Evidence location (\S\ref{sec:exp:locator}) & Does the agent's choice of field beat fixed-order checking? & Locator-uncertain copy & Qwen2.5-32B-Instruct, gpt-oss-20b, Mistral-Small-24B; scripted policies & Field choice, completion, budget & Figure~\ref{fig:budget}, right; Table~\ref{app:tab:multimodel} \\
Text judgments (\S\ref{sec:exp:text}) & What happens when the model judges text? & Text-evidence split, 240 tasks & Qwen2.5-32B-Instruct & Guidance; verification of attestations & Table~\ref{tab:text} \\
Closest systems (\S\ref{sec:exp:extsys}) & What does PRIME add over the closest systems given the same requirements? & Balanced split; locator-uncertain copy & AgentSpec, ResearchLoop, Proof-or-Stop, HESP, GLEAN; Qwen2.5-32B-Instruct & System; budget & Table~\ref{tab:extsys}; Tables~\ref{app:tab:extsys} and~\ref{app:tab:hesp} \\
Regulation (App.~\ref{app:fdaaa}) & Can PRIME execute requirements generated from a published regulation? & 1,999 held-out registrations; 600 for agents & Two generated programs; ceiling-targeting policy; Qwen2.5-32B-Instruct agents & Construction workflow; policy & Tables~\ref{app:tab:fdaaa:attrition} to~\ref{app:tab:fdaaa:confusion} \\
Pilot (exploratory) & How did enforcement behave on earlier hand-built tasks? & 60 hand-built tasks & gpt-4.1, gpt-4o-mini; Qwen2.5-7B and 32B for tests & Enforcement; static checklist & Table~\ref{app:tab:sixdomain} \\
\bottomrule
\end{tabular}
\caption{Study overview for Section~\ref{sec:experiments} and Appendices~\ref{app:fdaaa}--\ref{app:pilot}. Pilot results are not pooled.}
\label{app:tab:overview}
\end{table*}

\begin{table*}[t]
\centering
\small
\setlength{\tabcolsep}{5pt}
\begin{tabular}{llll}
\toprule
Atom & Field & Predicate & Value \\
\midrule
$e_1$ & \texttt{details} & exists & true \\
$e_2$ & \texttt{database\_specific.} & equals \texttt{True-alt} & false \\
      & \texttt{\ \ github\_reviewed} & & \\
$e_3$ & \texttt{database\_specific.severity} & exists & true \\
$e_4$ & \texttt{references.0.type} & equals \texttt{ADVISORY-alt} & false \\
$e_5$ & \texttt{references.0.url} & exists & true \\
$e_6$ & \texttt{database\_specific.cwe\_ids.0} & exists & true \\
$e_7$ & \texttt{database\_specific.} & exists & true \\
      & \texttt{\ \ github\_reviewed} & & \\
$e_8$ & \texttt{affected.0.package.ecosystem} & equals ``PyPI'' & true \\
\bottomrule
\end{tabular}
\caption{Routes to $c_1$, $c_2$, and $c_3$ are $\{e_1\}$, $\{e_1,e_2,e_3\}$ or $\{e_1,e_4,e_5\}$, and $\{e_1,e_2,e_3,e_6,e_7\}$, respectively. Constructed failures at $e_2$ and $e_4$ leave $c_1$ strongest.}
\label{app:tab:example}
\end{table*}

\begin{table*}[t]
\centering
\small
\setlength{\tabcolsep}{4pt}
\begin{tabular}{>{\raggedright\arraybackslash}p{0.22\textwidth}>{\raggedright\arraybackslash}p{0.58\textwidth}cc}
\toprule
Agent & What it sees or does & Enforced & Guided \\
\midrule
ReAct & the request, scope, conclusions, and tools \citep{yao2023react} & no & no \\
\quad +verify & plus an instruction to verify before concluding & no & no \\
\quad +spec & ReAct plus the specification view: each atom and the field it reads & no & no \\
\quad +spec+verify & both & no & no \\
\quad +program & +spec plus the committed routes & no & no \\
\quad +program+verify & both & no & no \\
Fully informed agent & +program+verify plus a live display of its own ledger & no & no \\
PRIME without guidance & the same prompt plus the authorized conclusions; others are refused & yes & no \\
PRIME & plus the strongest reachable conclusion, its open atoms, and reminders & yes & yes \\
\bottomrule
\end{tabular}
\caption{Section~\ref{sec:exp:ladder} information ladder. Agents share model, tools, budget, and scorer.}
\label{app:tab:agents}
\end{table*}

\paragraph{Records and splits.} The records were retrieved on 1 August 2026. GitHub Advisory Database records, which OSV serves, are licensed under CC BY 4.0 and are redistributed with attribution. Records from openFDA are in the public domain, and ClinicalTrials.gov and EDGAR records are public U.S. government information. The sources are OSV (\url{https://osv.dev}), ClinicalTrials.gov (\url{https://clinicaltrials.gov}), openFDA (\url{https://open.fda.gov}), and SEC EDGAR (\url{https://www.sec.gov/edgar}), and the records and task texts are in English. Some advisory texts contain personal details, such as a reporter's or vendor's contact e-mail address. No atom tests such personal details, and the records are kept as published. The SUFFICE data are released under CC BY 4.0, with each embedded record under its source's terms, and the code under the MIT License. The held-out splits share 60 families, and development (192 tasks, 48 families) and validation (48 tasks, 12 families) use the other 60. An LLM coding assistant working with the authors drafted the families' candidate record fields and conclusion sentences in code. Task instances were then generated and verified mechanically. The reference judgment $f^\star_x$ evaluates the authored standard on the stored atom values, never reads an agent's routes or ledger, and scores every agent identically. The balanced split is built so that unsupported conclusions are possible. In 199 of its 480 tasks (41.5\%), an atom on a route has a requirement the record does not meet. These tasks contain 233 such atoms, of which 104 require a value taken from another record, 55 a numeric threshold the record cannot meet, 23 a date a year earlier than the record's, and 51 a synthetic value outside the field's valid values, such as \texttt{True-alt} for a Boolean review flag or \texttt{ADVISORY-alt} for a reference type. In 129 tasks the evidence for some atom is unavailable (146 atoms). Rates on the balanced split therefore describe agent behavior under this mix, not how often unsupported conclusions arise in practice. The benchmark leaves $\Phi_x$ empty even where two atoms read the same field. The locator-uncertain copy keeps every task and answer of the balanced split but withholds each atom's field. Instead, the atom describes what it must read and lists three candidate fields, the true one among them. The text-evidence split has 240 clinical and security tasks, and Appendix~\ref{app:res:d} gives its construction. The released corpus has 1,898 task files, the 938 tasks plus the 480 locator-uncertain copies, the 240 text-evidence tasks, and a 240-task one-branch split that this paper does not analyze.

\paragraph{The running example.} The security example in Sections~\ref{sec:intro} to~\ref{sec:method} and Figure~\ref{fig:overview} is hypothetical. It illustrates the route structure of task \texttt{sec-v2-0114}, generated from OSV advisory \texttt{GHSA-5pqf-rvm7-3wgw}, not that advisory's review status. The task's conclusion sentences are ``The advisory identifies an affected package with a readable summary'' ($c_1$), ``The advisory is triage-ready: affected package identified, summary present, and its review/severity metadata are populated'' ($c_2$), and ``The advisory satisfies the checked currency-and-completeness protocol'' ($c_3$), and the illustrated acceptance decision corresponds to $c_2$. Table~\ref{app:tab:example} gives the actual atoms and routes. Atom $e_1$ reads the record's \texttt{details} field, although the conclusion sentences call it a summary. Atom $e_2$ requires the advisory's GitHub curator-review flag to equal the altered value \texttt{True-alt}, which no record can have, although this advisory's flag is \texttt{true}. Atom $e_4$ likewise requires the first reference's type to equal \texttt{ADVISORY-alt}, although that reference is typed \texttt{ADVISORY}. These are constructed requirement failures, not evidence that the advisory was unreviewed or that its reference is not a published advisory. The other atoms test that the severity ($e_3$) and the first reference's URL ($e_5$) exist, the presence of a weakness identifier ($e_6$) and of the review flag ($e_7$), and whether the ecosystem is PyPI ($e_8$), which lies on no route. In this task the commitment procedure sees each atom's tool, parameters, and field but never its value or the routes.

\paragraph{Agents and models.} Confirmatory runs serve Qwen2.5-32B-Instruct \citep{qwen2024qwen25} with vLLM \citep{kwon2023vllm} at temperature zero, one run per task. The checkpoints are \nolinkurl{Qwen/Qwen2.5-32B-Instruct}, \nolinkurl{Qwen/Qwen2.5-7B-Instruct}, \nolinkurl{openai/gpt-oss-20b} \citep{openai2025gptoss}, and \nolinkurl{mistralai/Mistral-Small-24B-Instruct-2501} \citep{mistral2025small3}, all under the Apache 2.0 license, served by vLLM 0.24.0 (0.10.1 for part of the Mistral-Small-24B runs) with a 16,384-token context. Qwen2.5-32B-Instruct ran on two A100 GPUs for the registered comparisons and on one or two A100 80GB, H100, H200, or RTX Pro Blackwell GPUs for the later experiments. The runs behind the reported results used about 90 GPU-hours, and the development-split runs that set the minimum effect for guidance used about 20 more. Table~\ref{app:tab:agents} lists what each agent of Section~\ref{sec:exp:ladder} sees. PRIME without guidance adds the \emph{mask}, the authorization gate of Section~\ref{sec:exp:ladder}, which refuses conclusions the recorded checks do not authorize. The fully informed agent and PRIME without guidance receive identical prompts except for the list of authorized conclusions, which a regression test verifies. The experiments of Sections~\ref{sec:exp:ladder} and~\ref{sec:exp:guidance} were repeated with gpt-oss-20b, served on H200 GPUs, and with Mistral-Small-24B, served on H100 and A100 80GB GPUs. The enforcement and guidance comparisons were also repeated with Qwen2.5-7B, served on one A100 GPU. The text-judgment experiment and the agent runs of the regulatory evaluation used Qwen2.5-32B-Instruct only. vLLM's Mistral tool-call parser was wrapped to accept a single JSON object where it expects a list (in a diagnostic cell, 29 of 529 requests) and to return as text the few outputs that name a nonexistent tool or fabricate a tool result (19 of 529); well-formed outputs are unaffected. An episode whose model calls still failed after the client's retries carries no measurement and is left out of the rates, which are therefore taken over completed episodes; this removed at most four episodes from any reported cell (eight of the 4,320 episodes of Section~\ref{sec:exp:ladder}), and the released code lists each one.

\paragraph{Budgets, stopping, and resolvers.} Budgets count tool calls, and one call can settle several atoms, as in the running example. Every request costs one call except as follows. Only the first request of each model turn is executed, and the others are ignored at no cost. A request that repeats an executed request, with the same tool and arguments and, for an attestation, the same read status of its source text, is answered from a cache at no cost. An unrecognized request is refused and charged, and an identical repeat of it is answered from the cache like any other repeat. Reporting requests are never answered from the cache, so they are always charged, whether accepted, held, or refused. A turn without a request is free, as are the reminders and the runtime's final statement. Since every charge is zero or one call and checking ends when the budget reaches zero, no request is ever unaffordable. Every reported outcome is the one this charging rule produces.
If an unenforced agent stops without stating a conclusion, it is asked once, at no cost, which conclusions it asserts, and that answer is scored. PRIME's runtime instead states the strongest authorized conclusion. Field atoms are decided by deterministic resolvers that evaluate a predicate on one field of the task's embedded record. Four scripted policies choose their checks by code but use the same resolvers, ledger, and final statement as PRIME. At every step, the \emph{ceiling-targeting} policy takes, among the live routes of all conclusions, the one whose open atoms need the fewest distinct calls, with ties going to the stronger conclusion, and checks the first of those atoms by identifier. It continues until no live route has an open atom or the budget is spent, so it does not stop when a conclusion becomes authorized. The \emph{random-order} policy (random check in Figure~\ref{fig:budget}) checks a random open atom of a live route. On the locator-uncertain copy, \emph{blind enumeration} makes the ceiling-targeting policy's choices but reads an atom's candidate fields in listed order, one call per attempt, and treats the atom as impossible once the list is exhausted, and the \emph{true-locator} policy is ceiling-targeting with the correct fields. Completion on that copy is \emph{replayed}, that is, applied to PRIME's logged episodes from the agent's stopping point without calling the model. Text atoms are judged by the model, which \emph{attests} them with a value and a quote; Appendix~\ref{app:res:d} gives the admission rule and the \emph{verified} condition.

\paragraph{Endpoints.} If an episode states several conclusions, false authorization counts it if any is unsupported, and utility takes the strongest supported one. Both endpoints credit a true conclusion even when the agent did not check it, which favors unenforced agents. The ladder runs nine agents on 480 tasks at 20 calls (4,320 episodes), and the two PRIME agents also run at the three other budgets (2,880 episodes).

\paragraph{Reproducibility and artifacts.} We provide two public repositories for reproducibility: SUFFICE (\url{https://github.com/HamedKhosravi99/SUFFICE}) and PRIME (\url{https://github.com/HamedKhosravi99/PRIME}), whose \texttt{logs/} folder holds the archive of every episode log behind the reported numbers. The SUFFICE repository contains the benchmark generator, task specifications, data splits, executable checks, and scoring procedures. The PRIME repository contains the review runtime, evidence ledger, authorization gate, guidance and completion procedures, agent configurations, and evaluation scripts. The 938-task corpus comprises 480 balanced, 218 natural-prevalence, 192 development, and 48 validation tasks; locator-uncertain and text-evidence variants are evaluated separately. Released experimental records and analysis scripts support inspection of tool requests, recorded evidence, final conclusions, attestation errors, budget accounting, and reported metrics. Reproduction instructions are provided in the repository documentation.

\section{Statistical Protocol}
\label{app:protocol}

Before any confirmatory episode was scored, we froze the utility function, endpoints, agents, budgets, tests, multiplicity groups, and the minimum effect for guidance.

\paragraph{Unit of inference and tests.} The family is the unit of inference, because the tasks of one family share their request, conclusions, routes, and atoms. A contrast is a family-mean paired difference between two agents on the same tasks, tested by a sign-flip randomization test and reported with a 95\% bootstrap interval. An exact McNemar test that treats tasks as independent complements each randomization test, and discordant families are those in which the two compared agents differ. Holm correction applies within three registered groups: the ladder contrasts of Section~\ref{sec:exp:ladder}, the guidance contrasts of Section~\ref{sec:exp:guidance}, and induction-method contrasts that this paper does not report.

\paragraph{Minimum effect for guidance.} The minimum effect was set on development tasks with branching routes as $\delta=\max(5.0,\,0.2\,G_{\mathrm{dev}})$ with $G_{\mathrm{dev}}=30.79$, the utility gap between PRIME without guidance and an informed route planner (an LLM-free policy that knows which routes hold but still pays for their checks), giving $\delta=6.16$. The power of an equivalence test at zero effect was 0.708, below the registered 0.8, so, as decided in advance, no equivalence conclusion is drawn.

\paragraph{Confirmatory and secondary analyses.} The registered confirmatory analysis consists of the ladder and guidance contrasts on the balanced split, with Qwen2.5-32B-Instruct as the primary model and Qwen2.5-7B as a secondary one. All other analyses are secondary: the premature-conclusion rate, added after the confirmatory results as a descriptive endpoint computed from fields logged in every episode; the other models and sampled decoding; the locator-uncertain and text-evidence splits, including the verified-attestation run; the flat gate and the after-the-fact filter; and the regulatory evaluation. No registered test depends on them, and their intervals and tests are not corrected for multiplicity.

\paragraph{Deviations.} Five deviations were logged with dates and reasons; three concern the analyses reported here: (1) confirmatory runs were queued behind the development pilot, with no confirmatory result computed before the minimum effect was frozen; (2) an emission-rate column and a flag for agents answering under 25\% of answerable tasks were added after the first 7B results, before any 32B confirmatory data; and (3) a default width limit in the route compiler was made optional after a round-trip test showed that 189 authored routes are seven atoms wide, and the test reproduces all 1,440 authored conclusions on all 74,784 realizable states. The other two concern analyses this paper omits: the registered induction-method contrasts and a descriptive failure-mode table. The registered four-agent natural-prevalence set is also not reported.

\section{Additional Results}
\label{app:results}

This appendix adds evidence for each experiment, in the order of Section~\ref{sec:experiments}. Differences reported with a confidence interval are family-mean paired differences (Appendix~\ref{app:protocol}), so they can differ from differences between the pooled values in the tables.

\subsection{The full information ladder and other models}
\label{app:res:a}

Section~\ref{sec:exp:ladder} reports selected rungs of the ladder. Of the information steps, only the ledger display changed false authorization significantly, adding 4.8 points with a 95\% confidence interval (CI) of [1.2, 8.6], and the mask removed 28.0 points (95\% CI [22.7, 33.5]). Here we show every step and test whether the enforcement result depends on the model or on greedy decoding.

\begin{figure*}[t]
\centering
\includegraphics[width=0.9\textwidth]{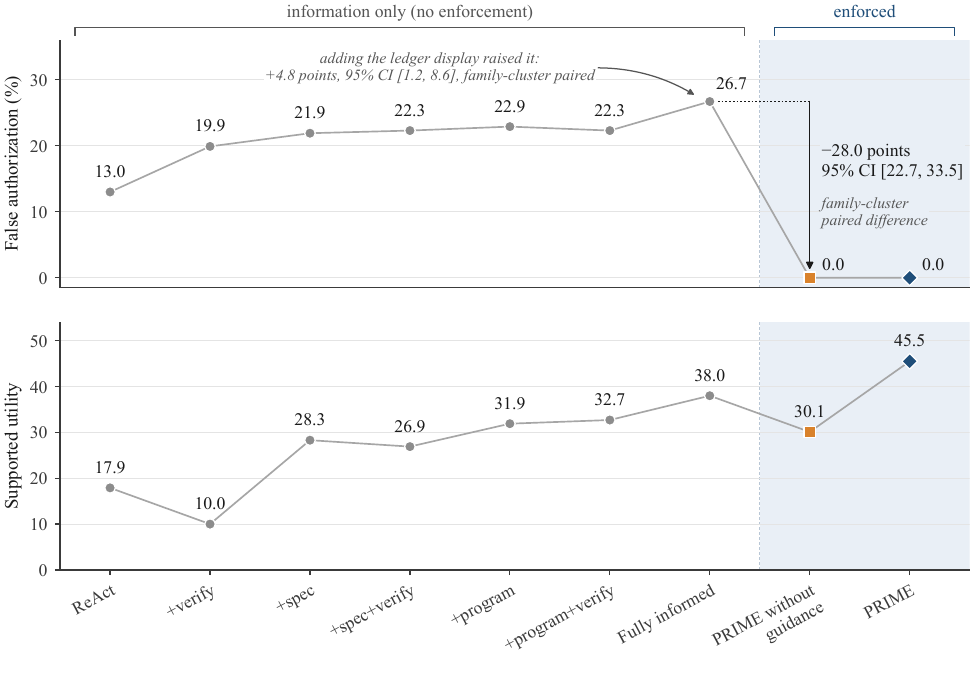}
\caption{Information ladder on 480 tasks at 20 calls. Top shows false authorization and bottom supported utility. Annotations are family-mean paired differences with 95\% intervals.}
\label{app:fig:ladder}
\end{figure*}

\textbf{Every Qwen2.5-32B-Instruct agent without the mask stated unsupported conclusions, and the mask removed them for every model and decoding setting tested.} Figure~\ref{app:fig:ladder} shows all nine agents, including the +spec and +program steps without the verification instruction. From +verify to the fully informed agent, false authorization stayed between 19.9\% and 26.7\% while utility rose from 10.0 to 38.0, and the mask is the only step that brings false authorization to zero. Information raised both quantities because the agent found more fields and stated a conclusion in more episodes: Table~\ref{tab:main} shows emission rising from 59.0\% for ReAct to 99.0\% for the fully informed agent, and each additional statement is another occasion to overclaim.

Table~\ref{app:tab:transfer} in Section~\ref{sec:exp:ladder} gives the other models. For gpt-oss-20b and Mistral-Small-24B, the mask cost family-mean utility of 2.5 (95\% CI [0.9, 4.2]) and 5.6 ([3.1, 8.1]) relative to the fully informed agent. On the security domain alone, it removed 31.8 points of false authorization for Qwen2.5-32B. The sampled-decoding runs of Section~\ref{sec:exp:ladder} use temperature 0.7 and three seeds. These settings change how the agent behaves but not the check the mask applies, which is why the enforced rate is zero in every run while the unenforced rate varies with how often the agent states a conclusion.

\subsection{Registered guidance cells and alternatives to guidance}
\label{app:res:b}

Section~\ref{sec:exp:guidance} reports the guidance gain over all tasks at 20 calls. Under the mask alone, utility fell from the fully informed agent's 38.0 to 30.1 (family-mean cost 7.4, 95\% CI [4.5, 10.3]), and guidance raised it to 45.5 (gain 15.6, 95\% CI [12.5, 18.6]). This comparison adds the gate and also puts the list of authorized conclusions in the prompt, so its utility change reflects both. At 20 calls, the runtime's final statement supplied 35.0 of PRIME's 45.5 points, and completion added 0.5 to 0.9 calls per episode. The registered analysis tested the gain separately at each of the four budgets on nested and branching tasks, against the minimum effect of 6.16 points that the registered protocol of Appendix~\ref{app:protocol} fixed before scoring.

\begin{figure*}[t]
\centering
\includegraphics[width=0.85\textwidth]{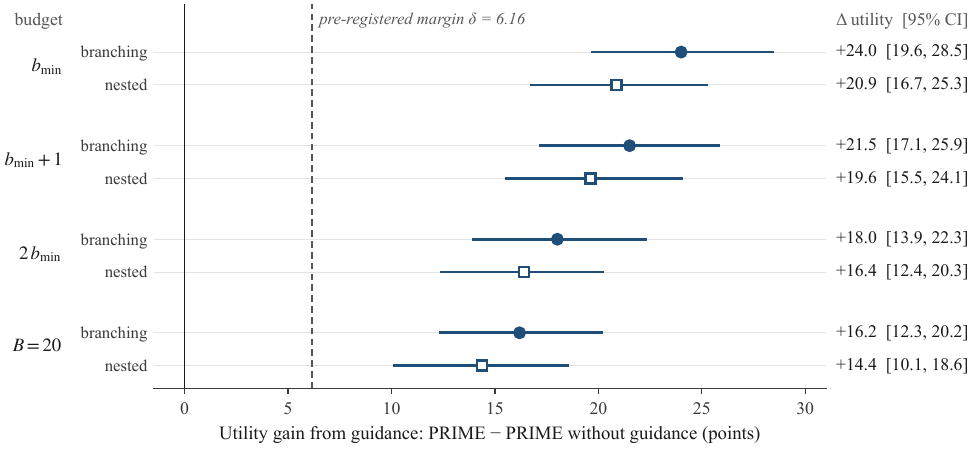}
\caption{Guidance gains by budget and route structure (480 tasks). Points show family-mean paired utility differences with 95\% intervals. Dashed line marks the minimum effect of 6.16.}
\label{app:fig:controllaw}
\end{figure*}

\textbf{Guidance raised utility by more than the minimum effect in all eight registered Qwen2.5-32B-Instruct cells.} Figure~\ref{app:fig:controllaw} shows that the family-mean gain of PRIME over PRIME without guidance fell with the budget, from 24.0 [19.6, 28.5] at $b_{\min}$ to 16.2 [12.3, 20.2] at 20 calls on branching tasks and from 20.9 [16.7, 25.3] to 14.4 [10.1, 18.6] on nested ones; even the lowest interval bound, 10.1, lies above 6.16. For Qwen2.5-7B, the registered secondary model, the eight gains were 0.6 to 3.1 points, every 95\% interval ended below 6.16, and every interval on nested tasks included zero. The gain falls with the budget because the guided agent stops sooner when it may call more, whereas the agent without guidance stops at its first permitted conclusion at every budget. Branching tasks, where a failed check can leave a second route to a stronger conclusion, gained more at every budget. The scripted references depend on the budget in the same way: the random-order policy scored 46.5 at $b_{\min}$ and matched the ceiling-targeting policy's 59.2 from $2b_{\min}$, so the order of checks matters only when the budget is tight.

\paragraph{An ideal after-the-fact filter.} A common alternative to enforcement during the episode is to let the agent answer freely, reject an unsupported answer afterward, and let the agent try again. With a perfect checker and unlimited retries, such a filter can at best state the strongest conclusion that the evidence the agent gathered supports, because retrying changes what the agent says, not what it checked. We computed this bound from the fully informed agent's logged episodes.

\begin{figure*}[t]
\centering
\includegraphics[width=0.8\textwidth]{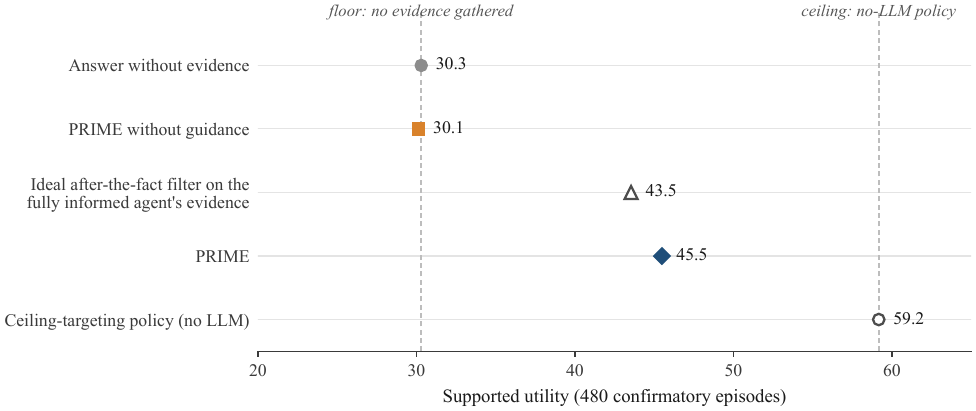}
\caption{Utility on 480 tasks at 20 calls, comparing direct answering, unguided PRIME, the ideal filter on the fully informed agent's evidence, PRIME, and scripted ceiling targeting.}
\label{app:fig:rejection}
\end{figure*}

\textbf{Even this ideal filter fell short of PRIME, because a filter cannot change which evidence is gathered.} Figure~\ref{app:fig:rejection} shows that the filter reached 43.5, against 45.5 for PRIME and 59.2 for the ceiling-targeting policy; answering without evidence scores 30.3, and PRIME without guidance 30.1. The mask alone falls below the ideal filter because refusing a conclusion leaves the agent with its first permitted answer, whereas the filter credits the strongest conclusion the agent's evidence supports. Guidance reverses the comparison because it acts while checks can still be made.

\paragraph{A gate on a single conclusion.} PRIME orders conclusions by strength, so partial evidence can still earn a weaker conclusion. To test whether this order matters, the \emph{flat gate} keeps the prompt of PRIME without guidance but exposes only the strongest conclusion and admits it once a route to it is established; otherwise the agent abstains. It has no graded conclusions, no guidance, and no completion.

\textbf{Without graded conclusions, the gate stated nothing unsupported but lost most of the useful output.} On the 480 balanced tasks at 20 calls, the flat gate reached utility 8.1 and stated a conclusion on 9.8\% of answerable tasks, against 30.1 for PRIME without guidance and 45.5 for PRIME (PRIME's gain 37.3, 95\% CI [32.6, 42.0], 60 of 60 families). The runtime refused 364 emission attempts, each either the conclusion before its route was established or a weaker one, and the agent stated the conclusion on 39 tasks. On the 152 tasks whose strongest conclusion is supported, the gate reached 25.7, against 38.8 for PRIME without guidance and 65.8 for PRIME; on the other 328 it can score nothing by construction, where the graded agents score 26.1 and 36.1. The order lets partial evidence earn a weaker conclusion, and guidance carries the agent from there to a stronger one.

\subsection{Field choice and other models}
\label{app:res:c}

The evidence-location results of Section~\ref{sec:exp:locator} measured the benefit of the agent's choice of field with one model, and the guidance results showed that the agent's stopping limits what guidance can add. Both behaviors belong to the model, so we repeated the experiments of Sections~\ref{sec:exp:ladder} and~\ref{sec:exp:guidance} with gpt-oss-20b and Mistral-Small-24B in all four domains.

\begin{table*}[t]
\centering
\footnotesize
\setlength{\tabcolsep}{1.8pt}
\begin{tabular}{llrrrrrrr}
\toprule
 & & FA (\%) & \multicolumn{3}{c}{Utility, balanced split} & Guidance gain & \multicolumn{2}{c}{Locator split, $b_{\min}$} \\
\cmidrule(lr){4-6}\cmidrule(lr){8-9}
Model & Agent & at 20 & $b_{\min}$ & $2b_{\min}$ & 20 & at $b_{\min}$ [95\% CI] & Utility & vs.\ blind [95\% CI] \\
\midrule
Qwen2.5-32B & Fully informed agent & 26.7 & 38.1 & 38.6 & 38.0 &  & 37.2 & $-$2.0 [$-$4.7, 0.8] \\
 & PRIME without guidance & 0.0 & 30.1 & 30.1 & 30.1 &  & 22.0 & $-$16.5 [$-$19.9, $-$13.1] \\
 & PRIME & 0.0 & 52.6 & 47.3 & 45.5 & 23.0 [19.5, 26.3] & 51.5 & 12.5 [8.9, 16.3] \\
 & PRIME with completion & 0.0 & 58.8 & 59.2 & 59.2 &  & 53.7 & 14.7 [11.5, 18.0] \\
\midrule
gpt-oss-20b & Fully informed agent & 17.7 & 30.6 & 30.5 & 30.5 &  & 31.5 & $-$7.5 [$-$9.7, $-$5.2] \\
 & PRIME without guidance & 0.0 & 26.9 & 26.9 & 26.9 &  & 25.7 & $-$12.8 [$-$15.3, $-$10.2] \\
 & PRIME & 0.0 & 55.7 & 58.8 & 58.5 & 28.8 [25.7, 31.9] & 52.6 & 14.2 [11.1, 17.7] \\
 & PRIME with completion & 0.0 & 56.1 & 58.6 & 58.6 &  & 52.2 & 13.8 [10.7, 17.2] \\
\midrule
Mistral-Small-24B & Fully informed agent & 19.4 & 34.4 & 34.4 & 35.2 &  & 32.3 & $-$6.4 [$-$9.1, $-$3.8] \\
 & PRIME without guidance & 0.0 & 29.3 & 29.2 & 29.3 &  & 25.4 & $-$13.4 [$-$16.0, $-$10.9] \\
 & PRIME & 0.0 & 37.9 & 36.7 & 36.6 & 10.3 [7.3, 13.3] & 33.6 & $-$4.7 [$-$7.8, $-$1.6] \\
 & PRIME with completion & 0.0 & 55.8 & 59.1 & 59.1 &  & 42.6 & 2.8 [0.7, 4.9] \\
\bottomrule
\end{tabular}
\caption{Three-model results on balanced and locator-uncertain splits. FA is measured at 20 calls and utility at three budgets. Differences are family-mean contrasts, with blind enumeration as the locator baseline (40.3). Locator completion is live, unlike the Qwen replay (53.6). The guidance gain is defined only for PRIME.}
\label{app:tab:multimodel}
\end{table*}

\textbf{Guidance helped every model, but only some models chose fields better than fixed-order checking.} Table~\ref{app:tab:multimodel} shows that guidance raised utility at $b_{\min}$ over the mask alone by 28.8 points (95\% CI [25.7, 31.9]) for gpt-oss-20b and by 10.3 ([7.3, 13.3]) for Mistral-Small-24B. On the locator-uncertain copy, gpt-oss-20b's choice among the candidate fields beat blind enumeration by 14.2 ([11.1, 17.7]) at $b_{\min}$, as Qwen2.5-32B-Instruct's did, but Mistral-Small-24B's fell 4.7 ([$-$7.8, $-$1.6]) short of it. For Qwen2.5-32B-Instruct, PRIME's advantage over blind enumeration at $b_{\min}$ (family-mean difference 12.5, 95\% CI [8.9, 16.3]) held in 56 of the 60 families; the mask alone, with the same candidates but no guidance, reached 22.0, and PRIME's gain over it was 29.0 (95\% CI [25.5, 32.6]).

Two behaviors of the model explain these differences. On the balanced split, Mistral-Small-24B's agent stopped after fewer calls (2.9 at $b_{\min}$, against 4.2 for gpt-oss-20b and 3.4 for Qwen2.5-32B-Instruct) and stated no conclusion in most episodes, so the runtime's statement at the stopping point carried most of its utility and guidance had less to add. On the locator-uncertain copy, it also chose fields less accurately. Its first read hit the correct field for 44.5\% of atoms, against 71.2\% for Qwen2.5-32B-Instruct and 81.0\% for gpt-oss-20b (33\% for a first try in listed order), and 29.2\% of its reads went to wrong candidates, against 14.7\% and 10.6\%. Completion compensates for both early stopping and poor field choices. On the balanced split, it matched the ceiling-targeting value at $2b_{\min}$ and 20 calls for Mistral-Small-24B and came within 0.6 points of it for gpt-oss-20b. At $b_{\min}$ it stayed below by family-mean differences of 2.4 points for gpt-oss-20b and 4.0 for Mistral-Small-24B. On the locator-uncertain copy, completion tries the candidates the agent has not read, and with it Mistral-Small-24B finished 2.8 points ([0.7, 4.9]) above blind enumeration. The benefit of letting the model choose where to look is therefore a property of the model, and completion limits the cost of a poor choice. A live run of completion with Qwen2.5-32B-Instruct at $b_{\min}$ scored 53.7, within 0.1 of the replayed 53.6, and Table~\ref{app:tab:multimodel} reports live runs for all three models.

\subsection{Text judgments and verification}
\label{app:res:d}

Each task of the text-evidence split has one field atom (that the title or summary is present) and three text atoms. For clinical registrations, the source is the eligibility-criteria text, and the text atoms are that enrollment is restricted to adults (minimum age 18 years or more), that it is restricted to one sex, and that healthy volunteers are accepted; $c_2$ requires the first, and $c_3$ the first true and the other two false. For security advisories, the source is the advisory's details text, and the text atoms are that it names the affected package (by its registered name or a distinctive part of it, not only the vendor or ecosystem), that it states a version in which the issue is fixed, and that it states the first-affected version; $c_2$ requires the first, and $c_3$ all three. Each atom's truth is computed from a structured field that the embedded record omits (minimum age, sex, and healthy-volunteer flag; the registered package name and the fixed and first-affected versions of the advisory's range data, the name counting as present when it or one of its distinctive tokens occurs in the text), so a task violates an atom whenever the registry does; records without the structured field (no fixed version, every version affected) are not used. The agent reads the text with \texttt{read\_text} and attests with \texttt{attest\_atom}. The runtime admits an attestation only for a text-evidence atom when the text was read in the episode and the quote is a verbatim span of at least eight characters. Otherwise, it rejects the call and leaves the atom open.

\paragraph{Rejected and wrong attestations.} The runtime rejected 15 of 225 attestation calls for the fully informed agent, 65 of 216 for PRIME without guidance, and 72 of 370 for PRIME. All 72 PRIME rejections involved nonverbatim quotations. Of the other 80 rejections, 14 involved nonverbatim quotations and 66 attempted attestations on non-text atoms. No call was rejected for an unread source. Rejected calls never enter the ledger. Among admitted attestations, wrong values ran in both directions. Under PRIME, adults-only enrollment was judged false for 5 registrations whose minimum age is 18 or more and true for 6 whose minimum age is lower. The one-sex and healthy-volunteer conditions were judged false for 5 whose structured fields hold them. In the advisories, the package, fixed version, or first-affected version was judged named for 11 whose text does not contain it and absent for 3 whose text does. Each unsupported conclusion PRIME stated was authorized by a route containing a wrongly attested atom, while two of the fully informed agent's unsupported conclusions were statements its ledger did not support. Because each domain is one family, contrasts on this split are at the record level. Against the mask alone, PRIME changed utility by $+10.1$ [7.8, 12.6] and false authorization by $+2.9$ [0.8, 5.4] (7 of 240 tasks, exact McNemar $p=0.016$), and against the fully informed agent by $-1.0$ [$-3.5$, 1.5] and $+0.4$ [$-1.7$, 2.5].

\paragraph{Verification.} The verified condition tests whether screening attestations before admission removes these errors, and at what cost. Before an attestation that passes the runtime's checks enters the ledger, the same model is asked once more, in a fresh context that contains only the atom's condition and the source text (not the agent's value, quote, or conversation), whether the text establishes the condition. The attestation is admitted only if the answer equals the agent's value; otherwise the atom stays open, and the agent is told only that an independent check did not confirm it. The prompt and rule were frozen before the run. The arm, budget, model, and serving stack are those of the PRIME run in Table~\ref{tab:text}, and the comparison is with that run.

\textbf{Screening removed most wrong judgments before they reached the ledger, at the cost of refusing many correct ones.} The last row of Table~\ref{tab:text} shows that false authorization fell from 4.6\% to 0.8\% (2 of 240 tasks, 95\% CI [0.2, 3.0]) and that 6 of the 222 admitted attestations were wrong (2.7\%, against 30 of 298), while supported utility fell from 57.5 to 55.1 (paired loss 2.4 [0.3, 4.3]). Emission was 100\% in both runs. The verifier refused 65 attestations in 59 episodes, 46 with values that match the benchmark reference and 19 with wrong values. It refused all 12 wrong attestations in clinical registrations but 7 of the 13 in security advisories, and it refused right attestations of all three text atoms. After a refusal, every episode still ended with the runtime stating the strongest authorized conclusion, never with abstention. The final conclusion was lower than in the PRIME run of Table~\ref{tab:text} on 43 tasks and higher on 6 (32 and 0 of them in the 59 episodes with a refusal), and the refusals prevented 8 of the 11 unsupported conclusions of that run. A ninth did not recur either, although its episode had no refusal. Verification added 1.2 model calls and 3.6\% more tokens per episode on average. These calls fall outside the 20-call evidence-tool budget, which is therefore not a matched total compute budget.

The two remaining unsupported conclusions, the same in both runs, involve product mentions in advisory text (``AWS API MCP Server''; ``Zope \ldots{} Plone'') that differ from the structured registry identifiers. The text-evidence score therefore measures agreement with the benchmark's structured reference for short, specified conditions read from individual fields; the released logs keep the passages for case-level inspection. Code, traces, and the frozen prompt are in \texttt{experiments/blind\_verification/} of the released code.

\subsection{The closest systems}
\label{app:extsys}

Section~\ref{sec:exp:extsys} runs five of the systems discussed in Appendix~\ref{app:related} on the 480 balanced tasks. Every system receives the same task prompt, the same requirement content (the committed program's conclusions, their order, and each conclusion's routes over checks, with each check's tool and field), and the same evidence: check values come from the same deterministic resolvers applied to the same tool observations. Each system receives the requirements in its own specification format, compiled mechanically from the program, and keeps its own control: what decides the next check, when to stop, and which conclusion is reported. All use Qwen2.5-32B-Instruct with greedy decoding and the 20-call budget, counted as for PRIME; their own extra model calls (planning turns, re-plans, planner decisions, judge calls) are outside the budget and reported in Table~\ref{app:tab:extsys}. For the fully informed agent and PRIME, the model calls in that table are agent turns, reminders after a silent stop, and the forced final question, and completion adds none. No system has a human in the loop, so none is offered the ask-the-user tool.

\paragraph{Instantiations.} \emph{AgentSpec} \citep{wang2026agentspec} adds no text to the agent's prompt, so the protected agent is the fully informed agent of Section~\ref{sec:exp:ladder}, the agent to which PRIME's mask is added. One rule guards its conclusion action: \texttt{rule @no\_unsupported\_conclusion trigger emit\_claim check !conclusion\_established enforce stop end}, whose predicate holds when a route of the claimed conclusion has all of its checks recorded as true. We run the enforcement the paper evaluates, \emph{stop}, which ends the episode without a conclusion, and the corrective enforcement it describes, \emph{self-examine}, which returns the violated rule to the model in the released comment format and lets it re-plan up to four times per step. \emph{ResearchLoop} \citep{researchloop2026} follows its Algorithm~1: the request is the research question; the conclusions are registered in a claim ledger; one task contract per conclusion is queued from the weakest; code evaluates the evidence gates after every call (a route gate requires all of its checks, and a conclusion's gate passes when any of its route gates does); a task that produces no evidence twice is blocked; and the report contains the admitted conclusions. The model receives the paper's \emph{full} condition prompt with the domain nouns replaced and writes an explicit plan, which the runtime keeps with the run state. \emph{Proof-or-Stop} \citep{prooforstop2026} runs the lifecycle none, $c_1$, $c_2$, $c_3$ (done): a conclusion is an advance request that a code gate admits only when the recorded checks establish one of its routes; a refused request returns a block report that lists each check as established, failed, or missing; the agent may repair twice; the controller, not the agent, ends the loop, at done, at the repair cap, at the budget, or after more than three consecutive turns without an action; and the highest admitted stage is reported. Our adaptation of \emph{HESP} \citep{liu2026hesp} keeps its configuration except the probe tables, as described below. It has hypotheses ``the strongest supported conclusion is $c_k$'' plus a catch-all hypothesis, \emph{other},, a Bayesian ledger over the check values with base rates counted on the development split, probes (field reads) chosen by expected information gain per cost, and a controller that concludes when the leading hypothesis reaches posterior 0.8 with a supporting observation. The model receives HESP's released prompt and can only abstain. \emph{GLEAN} \citep{zhang2026glean} verifies the fully informed agent's logged trajectories: a same-model judge rates every step against each route of the stated conclusion with the paper's prompt, the ratings are accumulated and calibrated by Bayesian logistic regression on 100 development trajectories, uncertain cases receive the paper's differential checks, and the conclusion is stated only when its calibrated probability of being correct is at least 0.5.

\paragraph{Code and fidelity.} As of October 2026, HESP's code is released under the MIT license. AgentSpec's and ResearchLoop's repositories carry no license; AgentSpec's rule engine runs only LangChain text agents, and ResearchLoop's repository does not contain the controller its documentation invokes. Proof-or-Stop and GLEAN have no public code. We therefore reimplemented each decision procedure on the shared tool loop, reusing released prompts and constants where they exist, and record with the code every choice the papers leave open. Deviations from the papers: greedy decoding throughout; HESP's fixed per-probe tables were replaced by exact elimination over check values, because a fixed table cannot represent a conjunction of checks (with it, the posterior of $c_3$ stays near 0.56 after every check is read); and GLEAN's guideline expansion is unavailable because the routes of the stated conclusion are its only guidelines.

\begin{table}[t]
\centering
\scriptsize
\setlength{\tabcolsep}{2.2pt}
\begin{tabular}{lrrrrrr}
\toprule
System & FA\,$\downarrow$ & Prem.\,$\downarrow$ & Util.\,$\uparrow$ & Emit\,$\uparrow$ & Reads\,$\downarrow$ & Calls\,$\downarrow$ \\
\midrule
Fully informed agent & 26.7 & 28.3 & 38.0 & 99.0 & 3.09 & 4.4 \\
\midrule
AgentSpec, stop & 0.0 & 0.0 & 38.7 & 87.4 & 2.96 & 4.2 \\
AgentSpec, self-examine & 0.0 & 0.0 & 40.1 & 92.9 & 3.06 & 5.8 \\
ResearchLoop & 0.0 & 0.0 & 47.7 & 97.0 & 3.22 & 6.0 \\
Proof-or-Stop & 0.0 & 0.0 & 52.0 & 99.2 & 4.21 & 12.3 \\
GLEAN & 14.4 & 14.6 & 35.0 & 83.3 & 3.09 & 9.2 \\
HESP & 0.0 & 11.7 & 59.2 & 100.0 & 3.91 & 3.9 \\
\midrule
PRIME & 0.0 & 0.0 & 45.5 & 100.0 & 2.75 & 5.5 \\
PRIME with completion & 0.0 & 0.0 & 59.2 & 100.0 & 3.61 & 5.5 \\
\bottomrule
\end{tabular}
\caption{Closest-system results on 480 tasks at 20 calls. FA and Prem.\ are episode percentages, Emit is among answerable tasks, and Reads counts fields. Calls counts model turns, planning, and reminders as applicable, and for GLEAN only its judge calls on the fully informed agent's runs.}
\label{app:tab:extsys}
\end{table}

\textbf{Gates on the routes prevented unsupported conclusions but stopped short of the strongest supported one.} Table~\ref{app:tab:extsys} shows that AgentSpec, ResearchLoop, and Proof-or-Stop stated nothing unsupported and nothing their reads had not established. AgentSpec's stop rule fired in 21.7\% of episodes and ended them without a conclusion. Self-examination recovered some of these, at 5.8 model calls per task, but both stayed below PRIME without completion (by family-mean differences of 7.5 and 6.1 points). ResearchLoop's task queue led the agent from the weakest conclusion upward, and its final report contained only admitted conclusions. In 14.4\% of episodes, the agent's own report claimed more than its gates admitted. ResearchLoop reached 47.7, a family-mean 3.0 points above PRIME without completion (95\% CI [$-$0.3, 6.3]). Proof-or-Stop drove the agent toward $c_3$ and read more fields than PRIME's agent alone (4.2 against 2.8 per task, at 12.3 model calls), which put its utility a family-mean 7.7 points above PRIME without completion (95\% CI [5.0, 10.5]). Under PRIME's own cap, which answers at most three turns without an action per episode, its utility changed by 0.3 points, so the difference does not come from its more lenient cap. None of the three runs the remaining checks itself. Running them lifts PRIME with completion to 59.2, family-mean differences of 10.3 points over ResearchLoop and 5.6 over Proof-or-Stop (95\% CIs [7.7, 13.2] and [3.8, 7.5]).

\begin{table}[t]
\centering
\scriptsize
\setlength{\tabcolsep}{3pt}
\begin{tabular}{lrrrr}
\toprule
 & $b_{\min}$ & $b_{\min}{+}1$ & $2b_{\min}$ & 20 \\
\midrule
\multicolumn{5}{l}{\emph{Balanced split: utility $\uparrow$}} \\
\quad Ceiling-targeting (no model) & 59.2 & 59.2 & 59.2 & 59.2 \\
\quad HESP controller & 58.0 & 59.2 & 59.2 & 59.2 \\
\multicolumn{5}{l}{\emph{Locator-uncertain copy: utility $\uparrow$}} \\
\quad PRIME & 51.5 & 50.7 & 48.8 & 48.2 \\
\quad PRIME with completion & 53.7 & 54.6$^{r}$ & 59.2$^{r}$ & 59.2$^{r}$ \\
\quad Blind enumeration (no model) & 40.3 & 44.7 & 58.5 & 59.2 \\
\quad HESP controller & 11.2 & 12.3 & 50.3 & 59.0 \\
\quad True locators (no model) & 59.2 & 59.2 & 59.2 & 59.2 \\
\bottomrule
\end{tabular}
\caption{Budgeted PRIME--HESP comparison (480 tasks). The HESP and no-model controllers use scripted checks. The marker $^{r}$ denotes completion replay from PRIME stopping points. The balanced-split premature-conclusion rate is 0.0\% for ceiling targeting and 11.7\% for HESP at every budget.}
\label{app:tab:hesp}
\end{table}

\textbf{HESP matched PRIME with completion where every check was code reading a named field, and fell behind where budgets were tight or fields had to be located.} HESP's controller chose every probe and issued all 480 verdicts, and the model never abstained, so the live runs equal the controller's. Its utility equaled that of PRIME with completion on every task at 20 calls: both stated the strongest supported conclusion everywhere, which gives the ceiling of 59.2 that the scripted policies of Section~\ref{sec:exp:guidance} also reach at this budget, since every task has at most eight atoms; but in 11.7\% of tasks (95\% CI [6.4, 15.3]) it stated a conclusion its reads had not established, and it read a family-mean 0.35 more fields per task (95\% CI [0.23, 0.46]). When only conclusions that a system's own reads established are credited, PRIME with completion scored 59.2 and HESP 49.0 (family-mean difference 8.7, 95\% CI [5.0, 12.7], better on 56 tasks and worse on none). HESP's conclusions rest on a posterior: with base rates counted on development data, or on the other three domains only (utility 59.2, 10.2\% premature), the premature conclusions were correct, but with a uniform prior within each hypothesis it stated an unsupported conclusion on 5.0\% of tasks (utility 49.4), and with independent fair-coin checks it never reached the threshold. Raising HESP's threshold removes the premature conclusions at a cost: at 0.9 none were premature and utility stayed at 59.2 at 20 calls, but it read 4.6 fields per task (PRIME with completion, 3.6) and scored 50.4 at $b_{\min}$; at 0.95 it scored 36.6 at $b_{\min}$; at 0.99 it never concluded. PRIME reports what the routes establish, so it needs no such threshold. Table~\ref{app:tab:hesp} compares the two controllers by budget, with PRIME at $b_{\min}{+}1$ run for that table. At $b_{\min}$, the ceiling-targeting policy scored 59.2 against 58.0 for HESP's controller (family-mean difference 1.6, 95\% CI [0.6, 2.7], with 17 tasks better and none worse) with fewer reads (3.4 against 3.9), and the live PRIME with completion scored 58.8 (family-mean difference 1.1, $p=0.04$). On the locator-uncertain copy, HESP's controller cannot read the descriptions and abstains unless one hypothesis is probable enough: it scored 11.2 at $b_{\min}$, against 53.7 for PRIME with completion (family-mean difference 41.8, 95\% CI [37.1, 46.5]) and 40.3 for blind enumeration. At 20 calls, blind enumeration reached 59.2, while HESP reached 59.0 and stated one unsupported conclusion.

\textbf{GLEAN reduced unsupported conclusions but could not remove them.} Calibrated on 100 development trajectories, where its area under the receiver operating characteristic curve (AUROC) was 0.85, the verifier ranked the held-out stated conclusions with AUROC 0.84 and expected calibration error 0.06, and its differential checks fired on 49\% of them. It accepted 380 of the 466 stated conclusions, 69 of the 128 unsupported and 311 of the 338 supported, so false authorization fell from 26.7\% to 14.4\% and utility from 38.0 to 35.0, at 9.2 judge calls per task. A verifier of a finished trajectory can accept or reject the agent's conclusion, but it cannot direct the agent to the checks that would establish a stronger one.

\section{Externally Derived Regulatory Requirements}
\label{app:fdaaa}

\paragraph{Summary.} Section~\ref{sec:exp:regulation} gives the question, the setup, and the main results. This appendix describes the regulation and records, the two construction workflows, how each program was validated and frozen, the held-out sample, and the scoring, and then explains the results. The tracker is not a legal adjudication. The assistant-built program's reading of the deadline rules was chosen by agreement with the tracker on a development pool. A stated $c_k$ is \emph{unsupported} if $k$ exceeds the tracker's level (3: reported late; 2: due; 1: otherwise), and \emph{coverage} is the share of registrations with a stated conclusion. Under the scripted ceiling-targeting policy on all 1,999 registrations, the protocolized program covered 92.4\% with 3.5\% unsupported (95\% CI [2.8, 4.4]), and the assistant-built one 26.3\% with 1.8\%.

\paragraph{Regulation and records.} The source is the U.S. clinical-trial results-reporting regulation, 42 CFR Part~11 \citep{cfr2016part11}, which implements the results-reporting provisions of the Food and Drug Administration Amendments Act of 2007 (FDAAA). Part~11 was amended in December 2024 to update referenced web addresses \citep{cfr2024part11amend}. The programs encode its sections 11.22(b), 11.42, and 11.44(a) to (c). The records are ClinicalTrials.gov registrations, read through the registry's public API (version 2). The development pool is a snapshot of 6,000 registrations frozen on 3 August 2026, with its inputs pinned by hash. The reference is the FDAAA TrialsTracker \citep{devito2018trialstracker,devito2020compliance}, the reporting-status tracker of the summary, which classifies each tracked trial from the same registry records as reported, reported late, overdue, or ongoing. We use its export of 3 August 2026, which lists 49,755 trials. Because the tracker reads the same registry, the comparison tests the rule and the decision procedure, not the data.

\paragraph{Two construction workflows.} In the \emph{assistant-built} workflow, an LLM coding assistant (Anthropic Claude) wrote the program as code: the three conclusions, six checks that read the study type, start date, regulatory status of the product, primary completion date, results-submission date, and certification date, and the routes $\{e_1,e_2,e_3\}$, $\{e_1,\ldots,e_4\}$, and $\{e_1,\ldots,e_5\}$. No field path, predicate, or route was written or edited by a person. Every check runs through the runtime's unextended ten-predicate grammar, and each registration's reference values are computed by the runtime's own evaluator. In the \emph{protocolized} workflow, protocol and code were committed before any run. Qwen2.5-32B-Instruct received selected definitions of 42 CFR 11.10(a), the text of 11.22(b), 11.42, and 11.44(a) to (c), a catalog of the registry fields, and three unlabeled registrations. It wrote programs of at most 12 single-field atoms, routes for the three conclusions (here $c_2$ requires that the deadline has passed by the snapshot date), and a sandboxed function for derived values such as deadlines; each round drew one greedy and eight sampled candidates.

\paragraph{Validation and freezing.} The assistant-built program's deadline logic was refined over four cumulative readings of the statute: a tracker emulation, then the calendar-year deadline of 11.44(a), then the certification-timeliness condition of 11.44(b)(1), then removal of an administrative grace period. Each reading was scored for agreement with the tracker on the 6,000 snapshot registrations (94.8\%, 94.8\%, 94.3\%, and 98.7\%), and the last, which agreed best, was retained; its 76 residual disagreements were inspected and traced to the tracker's dating of certifications. No review of the program against the regulation by a person is recorded beyond this comparison, and the program was then fixed in the instance builder. The protocolized candidates were validated on 600 development-pool registrations, re-fetched in full: each candidate's full-evidence level was scored against the tracker (17 of 18 candidates valid; the best of round 0 agreed on 42.0\%), the best was returned once with 12 validation disagreements for repair, and the selected revision (71.0\%) was frozen and hashed. The frozen program checks that the study is interventional and studies an FDA-regulated drug or device, that primary completion falls after 17 January 2017, that a deadline 365 days after primary completion has passed, and that results were first submitted after it. The 365 days approximate the one-year deadline of 11.44(a) and differ from it by a day when the year contains 29 February. The frozen program omits the phase-1 and feasibility exclusions and the U.S.-connection condition of 11.22(b), and the certification extensions of 11.44(b) and (c).

\paragraph{Test samples.} The assistant-built program was first tested on the development pool. Of the 6,000 snapshot registrations, it classified 4,990 as not subject to the requirement; of the remaining 1,010, 481 belonged to sponsors with a single trial in the pool or fell outside a cap of 600 and were not used, and 210 could not be classified because an applicability field could not be read by the deployed predicates. The remaining 319 registrations, from 124 sponsors, were written as instances, validated against the task schema, and committed before any agent ran, and the agent runs saw no tracker labels. Because the reading of the statute was chosen against the tracker on this pool, agreement on the 319 is not out of sample with respect to that choice. The held-out sample removes this dependence. With the builder, its predicates, the selection rules, and the scoring rule frozen and the protocol fixed before any label was read, we sampled 2,000 of the 43,755 trials in the tracker export that lie outside the development pool and come from sponsors absent from it (seed 20261010), fetched each from the public API on 10 October 2026, and stored it verbatim; 1,999 fetches succeeded. Both programs were run on all 1,999 registrations, and the agents on a random subset of 600 of them; the tracker's labels were read once, after every episode had run. Two earlier runs of the protocolized test were discarded before any test label was read, because the sandbox blocked modules that the date library loads; the protocol's deviation log lists the fixes. Table~\ref{app:tab:fdaaa:attrition} shows how the assistant-built program's own exclusions, all based on registry fields and applied before any label was read, reduce the held-out sample to the 526 registrations it classifies; for the 353 whose applicability field the deployed predicate grammar cannot read (for instance a month-precision start date), no attestation is made, so the program abstains, as it did for 210 of the development pool's 1,010.

\begin{table}[t]
\centering
\footnotesize
\setlength{\tabcolsep}{4pt}
\begin{tabular}{lr}
\toprule
Held-out stratum & Value \\
\midrule
Sampled; fetched & 2,000; 1,999 \\
Not an applicable clinical trial (11.22(b)) & 370 \\
Not yet due at the snapshot date & 750 \\
Applicable and due & 879 \\
\quad applicability field unreadable & 353 \\
\quad classified & 526 \\
\midrule
Classified, of applicable and due & 59.8\% \\
Classified, of fetched & 26.3\% \\
Tracker agreement (518/526) & 98.5\% \\
\quad original statuses only (499/501) & 99.6\% \\
\quad unchanged since export (495/503) & 98.4\% \\
\bottomrule
\end{tabular}
\caption{Assistant-built held-out sample. Program exclusions precede tracker labels. The first two tracker-agreement rates have 95\% intervals [97.0, 99.2] and [98.6, 99.9].}
\label{app:tab:fdaaa:attrition}
\end{table}

\paragraph{Scoring against the tracker.} Two scores are used. The assistant-built program's classifications (compliant, late, overdue) are compared with the tracker's statuses by binary agreement: a registration agrees when both call it adverse (late or overdue for the program; reported late or overdue for the tracker) or both do not, over the registrations both classify. For the end-to-end test of both programs, unsupported conclusions and coverage are as defined in the summary, utility credits the strongest supported level as in Section~\ref{sec:bench}, and a registration without an instance counts as an abstention. Every test registration is one the tracker follows, and level 1 is not an independent determination that a trial is subject to the regulation, so this test cannot penalize calling an inapplicable trial applicable.

\paragraph{Results.} Under the ceiling-targeting policy the two programs reached utility 47.5 and 18.4, and the assistant-built program's unsupported rate of 1.8\% has a 95\% CI of [1.3, 2.5]. On the 1,112 registrations the tracker counts as due, the two stated a conclusion for 87.1\% and 44.4\%. The protocolized program's full-evidence level agrees with the tracker's on 71.6\% of these due registrations, close to its validation figure of 71.0\%, and on 94.3\% of the others, so its agreement over all test registrations is higher (81.6\%): 587 of the 600 validation registrations are due, against 1,112 of the 1,999 test registrations.

Where the assistant-built program classified a registration, its classification agreed closely with the tracker. Table~\ref{tab:fdaaa} shows agreement on 317 of the 319 development registrations (99.4\%), 52.0\% of which were late or overdue, and on 518 of the 526 held-out registrations it classified (98.5\%), with no stated conclusion lacking a route. The 98.5\% agreement and the 1.8\% unsupported rate use different definitions. Agreement compares the program's adverse or non-adverse classification with the tracker's on the 526 registrations it classified, while a conclusion is unsupported when its level exceeds the tracker's, for example when the program states that results are due for a trial the tracker still lists as ongoing, as in 25 of the 36 cases. The held-out classifications were 165 compliant, 164 late, and 197 overdue. Because the records were fetched on 10 October against the tracker's export of 3 August, a field updated in between can produce a mismatch that is not a policy difference. Of the 526, 23 carry such an update (including 3 results submissions and 1 certification), all 23 agree with the tracker, and agreement on the 503 unchanged records is 495 of 503 (98.4\%). Two of the five trials the program calls late and the tracker overdue submitted after the export. Both statuses are adverse, so they count as agreement. In the cross-tabulation of program against tracker, the registrations the program calls compliant are 146 reported and 19 ongoing, those it calls late are 157 reported late, 2 reported, and 5 overdue, and those it calls overdue are 191 overdue and 6 ongoing.

\begin{table}[t]
\centering
\footnotesize
\setlength{\tabcolsep}{4pt}
\begin{tabular}{lr}
\toprule
Quantity & Value \\
\midrule
Late or overdue under the assistant-built program & 52.0\% \\
Tracker agreement, development pool (317/319) & 99.4\% \\
Tracker agreement, held out (518/526) & 98.5\% \\
Held-out sample classified (526/1,999) & 26.3\% \\
Stated conclusions without a route & 0.0\% \\
\midrule
Filed on time (153): strongest conclusion stated & $c_2$ \\
Filed late (86): $c_3$ stated & 84.9\% \\
Never filed (80): $c_3$ stated & 0.0\% \\
\bottomrule
\end{tabular}
\caption{Assistant-built program on 319 development and 526 held-out registrations.}
\label{tab:fdaaa}
\end{table}

\paragraph{What explains the differences.} The differences trace mainly to the programs. The protocolized program omits the certification extensions of 11.44(b) and (c), and 55 of its 70 unsupported conclusions involve a filed certification. It also omits the phase-1 and feasibility exclusions and the U.S.-connection condition of 11.22(b), which this test cannot penalize. The assistant-built program's builder makes instances only for registrations it classifies as applicable and due, which limits its coverage, and 25 of its 36 unsupported conclusions are registrations the tracker keeps ongoing that the program counts as due. None of its eight held-out disagreements was updated after the export; they are clause differences: six trials whose deadline fell in the 30 days before the export, which the tracker's administrative grace period keeps ongoing and the program's reading calls overdue, and two late filers that the tracker lists as reported on time because of how it dates a certification. Of the 526 registrations, 41 carry a certification, and 3 of the 8 disagreements are among them. The two development-pool disagreements are of the same kind.

The runtime also declined to call a trial late without a filing. Table~\ref{app:tab:fdaaa:confusion} separates the program's verdict, its rule's classification of the registration, from the conclusion the runtime stated, which is what it could establish from the registry fields its checks read. On the 319 development registrations, the runtime stated $c_3$ for 73 of the 86 late filers and for none of the 80 trials that never filed. A trial that never filed has no submission event to establish $c_3$, whereas the tracker counts registry absence as non-submission, an inference an organization would have to add to the program as an explicit atom. For the other 13 late filers, the submission date appears only in an estimated field that the program does not read, so the runtime stated $c_2$. The 8 trials with an FDA Notice of Noncompliance were checked separately against the live registry by a standalone implementation of the same deadline logic; none is among the 319.

\begin{table}[t]
\centering
\footnotesize
\setlength{\tabcolsep}{2.5pt}
\begin{tabular}{lrrrcrr}
\toprule
 & \multicolumn{3}{c}{Tracker status} & & \multicolumn{2}{c}{Stated} \\
\cmidrule(lr){2-4}\cmidrule(lr){6-7}
Verdict & rep. & late & overdue & & $c_2$ & $c_3$ \\
\midrule
compliant (153) & 153 & 0 & 0 & & 153 & 0 \\
late (86) & 2 & 84 & 0 & & 13 & 73 \\
overdue (80) & 0 & 0 & 80 & & 80 & 0 \\
\bottomrule
\end{tabular}
\caption{Development registrations (319). Program verdicts versus tracker status (left) and runtime-reported conclusions (right).}
\label{app:tab:fdaaa:confusion}
\end{table}

The agents' results reflect stopping, not the rule. On the 319 development registrations, with the assistant-built program as the standard and Qwen2.5-32B-Instruct at 20 calls, the fully informed agent stated no unsupported conclusion at utility 49.1, stating $c_2$ for 151 registrations and $c_1$ for 168, never $c_3$, and stopping after four calls in 313 episodes. PRIME without completion also stated none but reached only 36.5: in 291 of the 319 episodes the agent made the three calls of $c_1$'s route and then answered in text through three reminders that named the one remaining field of $c_2$'s route, after which the runtime stated $c_1$. Every registration supports $c_2$, so stopping at $c_1$ loses 33.3 of the 66.7 points that $c_2$ earns. With completion, the runtime made the remaining calls in 299 of the 319 episodes (1.85 per episode), and PRIME reached 74.3, the scripted policies' value cell for cell. Table~\ref{app:tab:specbuild} shows the same pattern for both programs on the held-out subset of 600: PRIME with completion matched the ceiling-targeting policy (3.2\% and 1.7\% unsupported, utility 47.7 and 19.0), while PRIME alone (0.3\% and 0.2\%, utility 32.3 and 9.9) and the fully informed agent (0.0\% and 0.2\%, utility 31.2 and 13.1) stopped at weaker conclusions. Stopping at $c_1$, which this scoring credits for every test registration, also avoided most of the programs' errors. The runtime states what the committed program authorizes and no more, so agreement with the tracker depends on how faithfully the program encodes the regulation. The builder and the reading-by-reading comparison are in \texttt{src/suffice/construction/fdaaa/} of the released code, the pinned inputs in \texttt{data/fdaaa/external/}, and the held-out sample identifiers and record hashes in \texttt{experiments/fdaaa\_heldout/}.

\begin{table*}[t]
\centering
\small
\begin{tabular}{lrrrrrrr}
\toprule
 & & \multicolumn{2}{c}{ReAct} & \multicolumn{2}{c}{Static checklist} & \multicolumn{2}{c}{PRIME without guidance} \\
\cmidrule(lr){3-4}\cmidrule(lr){5-6}\cmidrule(lr){7-8}
Domain & $n$ & FA & Utility & FA & Utility & FA & Utility \\
\midrule
Software review & 21 & 47.6 & 24.6 & 0.0 & 20.6 & 0.0 & 37.3 \\
Research review & 8 & 100.0 & 4.2 & 0.0 & 10.4 & 0.0 & 37.5 \\
Security & 8 & 75.0 & 25.0 & 0.0 & 100.0 & 0.0 & 35.4 \\
Financial & 8 & 25.0 & 75.0 & 0.0 & 100.0 & 0.0 & 83.3 \\
Drug safety & 7 & 85.7 & 14.3 & 0.0 & 81.0 & 0.0 & 45.2 \\
Clinical trials & 8 & 87.5 & 18.8 & 0.0 & 39.6 & 0.0 & 45.8 \\
\bottomrule
\end{tabular}
\caption{Hand-built exploratory pilot with gpt-4.1 at 20 calls.}
\label{app:tab:sixdomain}
\end{table*}

\section{Exploratory Pilot on Hand-Built Tasks}
\label{app:pilot}

Before building SUFFICE, we ran an exploratory pilot on hand-built tasks to see whether enforcement behaved as intended outside generated records. The pilot is reported separately because its tasks, models, and agents differ from the benchmark's, and its results are never pooled with the confirmatory statistics. It has 21 software-review tasks on pinned repository commits, 8 research-review tasks, and 31 record tasks over 27 records, and every conclusion has one route. Runs used gpt-4.1 and gpt-4o-mini at 20 tool calls, except where noted. The \emph{static checklist} is an enforced agent that demands every atom on any route of a conclusion before stating it.

\textbf{Enforcement gave zero false authorization in every pilot domain.} Table~\ref{app:tab:sixdomain} shows that, with gpt-4.1, ReAct stated unsupported conclusions on 25.0\% to 100.0\% of the tasks in each domain, while the static checklist and PRIME without guidance stated none; with gpt-4o-mini on the software tasks, PRIME reached 0.0\% (utility 38.9) against ReAct's 71.4\% (34.1). Because every conclusion has one route, the static checklist and PRIME's program coincide here, so their utility differences come from the agent's behavior. The checklist beat PRIME without guidance on utility in security, financial, and drug safety; an exploratory guidance arm, added after that result was seen, raised PRIME to 100.0 in security and financial and to 76.2 against 81.0 in drug safety. The pilot's McNemar tests with Qwen2.5-7B and Qwen2.5-32B compared false authorization between PRIME with and without guidance, which is zero for both by construction, so they could not detect a guidance effect; the confirmatory study therefore tests guidance on utility.

\section{Extended Related Work}
\label{app:related}

LLM outputs, including those of agents that interleave reasoning with tool calls \citep{yao2023react}, have been made more reliable from the answer backward, through self-critique and revision \citep{shinn2023reflexion,dhuliawala2023cove}, factuality checks against retrieved evidence \citep{chern2023factool,wei2024safe}, and selective prediction or calibrated withholding \citep{geifman2017selective,mohri2024conformal}. Minimal evidence group identification determines small evidence sets that verify a claim, including alternative sets for one claim \citep{li2024meg}. GLEAN \citep{zhang2026glean} grounds verification in clinical guidelines and actively collects evidence when calibrated uncertainty is high; HALT \citep{roh2026halt} stops retrieval when evidence coverage is sufficient. These methods motivate evidence-aware checking, but PRIME studies a distinct target: the explicit logical requirements for \emph{several ordered conclusions}, including alternative sufficient routes that remain open after a failed check. Behavioral policy failures documented by \citet{schmied2025greedy}, \citet{reddy2026reasonless}, and agent benchmarks \citep{yao2024taubench} further motivate an enforcement boundary outside the model.

External controllers provide the closest foundations. Runtime shielding blocks unsafe actions \citep{alshiekh2018shielding}; AgentSpec \citep{wang2026agentspec} enforces configurable rules over agent actions and completion; Progent \citep{shi2025progent} controls tool privileges and policy changes; and \citet{doshi2026safe} derive tool-use constraints from hazard analysis. AGATE \citep{zhang2026agate} emphasizes provenance-aware protection of agent execution. These systems primarily control \emph{actions or transitions}, whereas PRIME controls \emph{which evidence-dependent conclusion may be released}. This boundary is not exclusive: AgentSpec can check agent completion, and action guards could complement PRIME's evidence resolvers.

Managing the information supplied to an agent can expose requirements, tool observations, and pending checks, but Section~\ref{sec:exp:ladder} shows that availability in the model's context is distinct from authorization of its output. PRIME keeps the ledger as runtime state and uses the same committed requirements both to guide the remaining checks and to reject conclusions the recorded checks do not authorize. This targets a narrower property than verifying all intermediate reasoning or the factual correctness of the observations.

More directly, ResearchLoop \citep{researchloop2026} uses durable task contracts, evidence objects, and claim-admission rules for AI-assisted research. Proof-or-Stop \citep{prooforstop2026} requires fresh, mechanically admissible evidence before advancing coding lifecycle states and includes repair or stopping. EG-VAR \citep{egvar2026} uses tool-attested proofs to certify empirical claims. These systems already demonstrate evidence gating; PRIME \emph{does not claim to originate evidence gating or precommitment}. Its more specific control problem asks, after partial or failed observations, which \emph{graded conclusion} is currently authorized and which \emph{still-live sufficient route} can potentially establish a stronger one within the remaining check budget. The committed route representation is used for both answers.

HESP \citep{liu2026hesp} is particularly relevant because it combines cost-aware probing with evidence-backed stopping. Its Bayesian hypothesis ledger and information-gain probe selection address diagnostic triage. PRIME instead represents conditions under which particular conclusions are logically sufficient, including disjunctive routes and weaker authorized alternatives. Likewise, GLEAN's calibrated uncertainty is different from PRIME's route-based authorization. These approaches are complementary. Section~\ref{sec:exp:extsys} runs adapted versions of both, with AgentSpec, ResearchLoop, and Proof-or-Stop, on the same tasks, requirements, and budgets. Appendix~\ref{app:extsys} reports the results.

The formal ingredients are classical. Minimal sufficient routes are related to prime implicants \citep{quine1952}, consistency-based diagnosis and truth-maintenance systems \citep{reiter1987,dekleer1986atms}, and abductive classifier explanations \citep{shih2018symbolic,ignatiev2019abduction}. Budgeted checks are related to sequential testing \citep{unluyurt2004sequential,deshpande2016approximation} and active feature-value acquisition \citep{saartsechansky2009active}. PRIME claims none of these ideas in isolation. Its contribution is a \emph{shared precommitted program for route-aware evidence acquisition and graded answer authorization}, together with a layered evaluation showing where enforcement succeeds and where incorrect programs or fallible resolvers still cause errors. In operational reviews, this distinction matters when blocking an unsupported claim is necessary but insufficient: another acceptable evidence route may still be worth checking before reporting a weaker result or abstaining.

\end{document}